\documentclass{article}

\usepackage{PRIMEarxiv}

\usepackage[utf8]{inputenc} 
\usepackage[T1]{fontenc}    
\usepackage{hyperref}       
\usepackage{url}            
\usepackage{booktabs}       
\usepackage{amsfonts}       
\usepackage{nicefrac}       
\usepackage{microtype}      
\usepackage{lipsum}
\usepackage{fancyhdr}       
\usepackage{graphicx}       
\graphicspath{{media/}}     

\usepackage[authoryear,round]{natbib}

\usepackage{hyperref}
\usepackage{url}
\usepackage{CJKutf8}
\usepackage{enumitem} 
\usepackage{amsthm}
\usepackage{wrapfig}
\usepackage{booktabs}
\usepackage{multirow}
\usepackage{tabularx}
\usepackage{array}
\usepackage[table]{xcolor}
\usepackage{booktabs}
\usepackage{tabularx}
\usepackage{array}
\usepackage{caption}
\usepackage{graphicx}
\usepackage{xcolor}
\newtheorem{theorem}{Theorem}[section]

\usepackage{subcaption}
\usepackage{mathtools}

\newtheorem{definition}{Definition}[section]

\NewDocumentCommand{\tjqq}{+m}{%
  \begin{CJK}{UTF8}{gbsn}{\color{black}#1}\end{CJK}
}

\usepackage{float} 

\usepackage{booktabs}
\usepackage[table]{xcolor}
\usepackage{makecell}

\definecolor{improvegreen}{RGB}{0,128,0}
\definecolor{oursgray}{RGB}{232,232,232}

\newcommand{\gain}[1]{\textcolor{improvegreen}{\scriptsize$_{+#1}$}}
\usepackage{multirow}
\usepackage{tabularx}
\usepackage{array}
\usepackage[table]{xcolor}
\usepackage{booktabs}
\usepackage{tabularx}
\usepackage{array}
\usepackage{caption}
\usepackage{graphicx}
\usepackage{xcolor}

\usepackage{graphicx}
\usepackage{xcolor}
\usepackage{tikz}

\usepackage{graphicx}
\usepackage{xcolor}
\usepackage{tikz}
\usepackage{amssymb}
\usepackage{graphicx}
\usepackage{xcolor}
\usepackage{tikz}
\usepackage{amssymb}

\usepackage{graphicx}
\usepackage{xcolor}
\usepackage{tikz}
\usepackage{amssymb}

\newcommand{\resultcell}[4]{%
  \begin{minipage}[t]{0.155\textwidth}
    \centering
    \vbox{%
      \offinterlineskip
      \hbox{%
        \includegraphics[
          width=\linewidth,
          height=\linewidth,
          trim={#1},
          clip
        ]{#2}%
      }%
      \kern0.6mm
      \hbox to \linewidth{%
        \hfil
        {\fontsize{5.5}{6}\selectfont
         #3\hspace{0.35mm}#4}%
        \hfil
      }%
    }%
  \end{minipage}%
}

\title{Mitigating Capability Degradation in LLM Self-Improvement via Representation Geometry Regularization}

\author{
  Jiaqi Tian \\
  School of Artificial Intelligence \\
  Jilin University \\
  China \\
  \texttt{tjq26@mails.jlu.edu.cn}
  \And
  Ting Long\thanks{Corresponding author.} \\
  School of Artificial Intelligence \\
  Jilin University \\
  China \\
  \texttt{longting@jlu.edu.cn}
}

\begin{document}
\maketitle
\vspace{-5pt}
\begin{abstract}
On-Policy Self-distillation is a self-improvement paradigm that enables large language models (LLMs) to improve their capabilities by leveraging the same model as both teacher and student under different conditioning contexts.
However, we find that OPSD methods can progressively degrade previously acquired capabilities: queries that are correctly responded at earlier training stages may become incorrectly responded as optimization proceeds, even the overall performance continues to improve. Further analysis reveals that this degradation coincides with a deterioration of the geometry of hidden representations, characterized by reduced intra-cluster compactness and diminished inter-cluster separation. Motivated by this observation, we propose Representation Geometry Regularization (RGR), which preserves representation geometry during post-training by encouraging compactness within cluster and separation across cluster. RGR can be seamlessly integrated into existing OPSD methods without altering their core optimization procedures. Extensive experiments on multiple OPSD methods and benchmarks show that RGR generally alleviates capability degradation while retaining capability improvements. Our code is available in \url{https://github.com/supermachinelearner/RGR}

\end{abstract}
\vspace{-0.2cm}
\section{Introduction}
\vspace{-0.2cm}

\begin{wrapfigure}{r}{0.5\linewidth}   
  \centering
  \vspace{-5pt}
  \includegraphics[width=\linewidth]{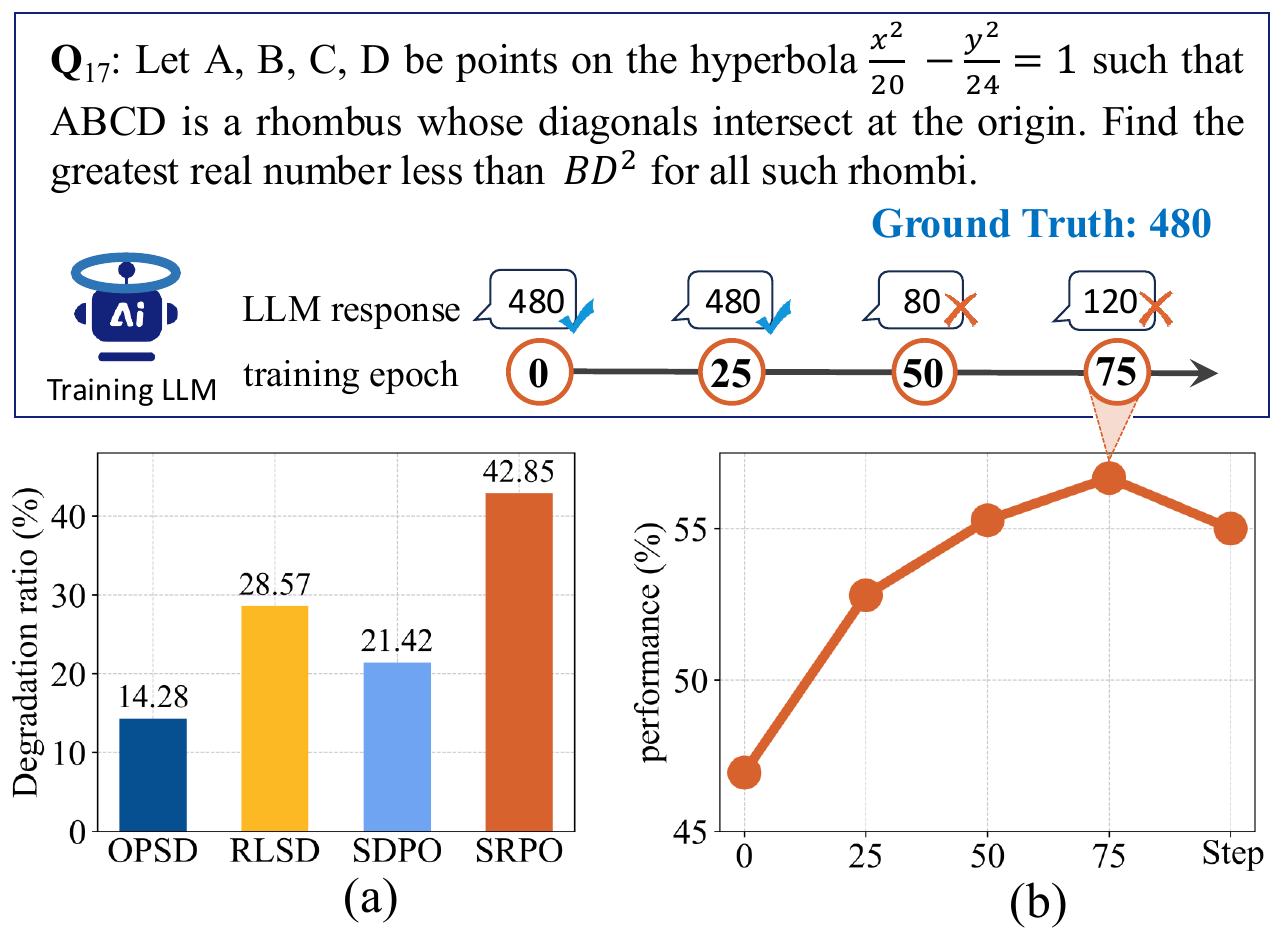}
  \vspace{-20pt}
  \caption{(a) The capability degradation ratio of existing methods; (b) Although the overall performance improves during training, some problems that were previously responded correctly become incorrectly answered as optimization progresses.}
  \vspace{-25pt}
  \label{fig:intro}
\end{wrapfigure}
On-Policy Self-Distillation (OPSD)  \citep{zhao2026self,hubotter2026reinforcement,yang2026self,li2026unifying}, is a self-improvement paradigm for the post-training of large language models (LLMs)  \citep{brown2020language,vaswani2017attention,yang2025qwen3,zeng2026glm,xiao2026mimo,xu2026deepseek}, which enables LLMs to improve their capabilities through self-knowledge distillation.
Unlike conventional knowledge distillation\citep{hinton2015distilling,sanh2019distilbert}, which relies on a separately trained teacher, OPSD uses the same LLM as both teacher and student under different conditioning contexts. 
Specifically, the student model generates responses to the original query, while teacher model, which share the same parameters with the student model, conditioned on additional privileged information (e.g., verified reasoning traces) to provide supervision for these generated responses.

Although recent studies \citep{zhao2026self,hubotter2026reinforcement,yang2026self,li2026unifying,wang2026trace} have empirically demonstrated that optimizing the LLM with OPSD can outperform conventional post-training methods such as SFT \citep{wei2021finetuned} and GRPO \citep{shao2024deepseekmath}, benefiting from the dense and fine-grained token-level supervision of OPSD. We observe a progressive reasoning capability degradation phenomenon during self-improvement in existing OPSD methods. As shown in Figure \ref{fig:intro}(b), a query that is responded correctly at earlier training stages gradually becomes incorrectly responded as training proceeds, even as overall performance continues to improve.

Furthermore, we examine several representative OPSD methods and find that this phenomenon consistently occurs across these methods, albeit to varying degrees, as it is shown in Figure \ref{fig:intro} (a). Such degradation may undermine the stability of self-improvement.

To better understand this phenomenon, we conduct further analysis, and the results reveal that this capability degradation is accompanied by a progressive deterioration in the geometry of hidden representations. When a query is answered correctly at earlier stages of training, the hidden representations of its generated responses exhibit a relatively well-structured clustering pattern, characterized by larger inter-cluster distances and clearer cluster boundaries. As training proceeds and the response to the same query gradually becomes incorrect, the relative distances between clusters shrink and the cluster boundaries become increasingly indistinct. This observation suggests that preserving the geometric structure of the representation space may be important for maintaining previously acquired capabilities during self-improvement.

Based on this finding, we theoretically show that preserving intra-cluster compactness and inter-cluster separation in the hidden representation space helps maintain the capabilities that the LLM has already acquired. Based on this theoretical insight, we develop  Representation Geometry Regularization (RGR), which constrains the evolution of hidden representations during self-improvement. RGR incorporates two geometry-preserving constraints called intra-cluster compactness and inter-cluster separation into the OPSD methods, enabling the model to improve targeted capabilities while mitigating the degradation of previously acquired ones. Experiments across four existing SD models and three widely used benchmarks show that RGR substantially reduces correct-to-incorrect transitions during training while retaining its self-improvement gains. 

In summary, our contributions are summarized as follows:
\begin{itemize} [leftmargin=2em]
    \item We uncover an overlooked failure mode, where OPSD methods can degrade previously acquired capabilities, causing previously correct predictions to become incorrect.
    \item We identify representation geometry distortion as a key factor underlying this degradation, and theoretically characterize the importance of hidden representation geometry for prediction stability.
    \item We propose RGR, which incorporates geometry-preserving constraints into OPSD methods to enable self-improvement while reducing capability degradation.
\end{itemize}
\vspace{-0.2cm}
\section{Related works}
\vspace{-0.2cm}
On-policy self-distillation (OPSD) enables LLMs to improve their capabilities without requiring an external model as a teacher. Recently, it has attracted significant attention as a promising paradigm for scalable and efficient LLM self-improvement.
Based on how self-teacher signals are constructed and how external feedback is incorporated, we organize existing OPSD methods into privileged-information-driven methods and external-feedback-driven methods.The privileged-information-driven methods construct self-teacher signals using additional privileged information that is unavailable to the student policy at inference time. As a representative method, OPSD \citep{zhao2026self} uses the same model to construct self-teacher signals and optimize the student policy under different conditioning contexts: the self-teacher signals are constructed from both the question and a verified reasoning trace, whereas the student policy is conditioned only on the question.
The external-feedback-driven methods
retain the self-teacher structure while incorporating environmental feedback, verification results, or outcome rewards to construct the supervision for policy updating. Within this category, feedback can be used to construct distillation targets, select training objectives, or modulate token-level advantages. SDPO \citep{hubotter2026reinforcement} conditions the current model on textual or reward-derived feedback to form a self-teacher, whose next-token predictions serve as distillation targets for the student policy. SRPO \citep{li2026unifying}routes successful trajectories to reward-driven GRPO updates and failed trajectories to entropy-weighted, token-level self-distillation against a feedback-conditioned teacher distribution. RLSD \citep{yang2026self} uses verifiable outcomes to determine the direction of policy updates and token-level teacher–student discrepancies to modulate their magnitude, integrating self-distillation into reward-driven learning.

\section{Preliminaries}
\vspace{-0.05cm}
Given a query $q$, an LLM $\pi_\theta$ typically generates a response $y=(y_1,\ldots,y_T)$ according to $y_t \sim \pi_\theta(\cdot \mid q,y_{<t})$,  
where $y_{<t}=(y_1,\ldots,y_{t-1})$ denotes the previously generated tokens.
OPSD methods improve the LLM by using the same model under different conditioning contexts as the teacher and student to conduct knowledge distillation. Specifically, the student is designed to generate an on-policy response based only on the original query, like the LLM, \textit{i.e.},
$y \sim \pi_\theta(\cdot\mid q)$.
While the teacher is designed to generate the response with additional privileged information $z$, such as a verified reasoning trace, $\pi_\theta (\cdot\mid q,z,y_{<t})$.
The OPSD methods optimize the model by minimizing the divergence between the teacher and student token distributions along the student's trajectory:
\vspace{-0.2cm}
\begin{equation}
\mathcal{L}_{\mathrm{OPSD}}(\theta)
=
\mathbb{E}_{q \sim \mathcal{D},\, y \sim \pi_{\theta}(\cdot \mid q)}
\left[
\sum_{t=1}^{T}
D\left(
\pi_{\theta}(\cdot \mid q, z, y_{<t})
\,\middle\|\,
\pi_{\theta}(\cdot \mid q, y_{<t})
\right)
\right].
\end{equation}
\vspace{-0.3cm}

where $D(\cdot|\cdot)$ denotes a divergence measure between the teacher and student token distributions. $D$ can be instantiated by forward KL divergence  \citep{kullback1951information}, reverse KL divergence \citep{minka2005divergence}, Jensen--Shannon divergence \citep{lin1991divergence}, or other suitable distribution-matching objectives.

Since the teacher and student share the underlying model parameters, optimizing $\mathcal{L}_{\mathrm{OPSD}}$ updates the LLM itself. Consequently, the model iteratively improves its responses by distilling the additional information available to the teacher into its ordinary, query-conditioned behavior.


\section{The Geometry of Capability Degradation}
\label{sec:cap_degradation}
\vspace{-0.05cm}
\subsection{Progressive Capability Degradation}

\begin{wrapfigure}{r}{0.48\textwidth}
    \centering
    \vspace{-5pt}
    \includegraphics[width=\linewidth]{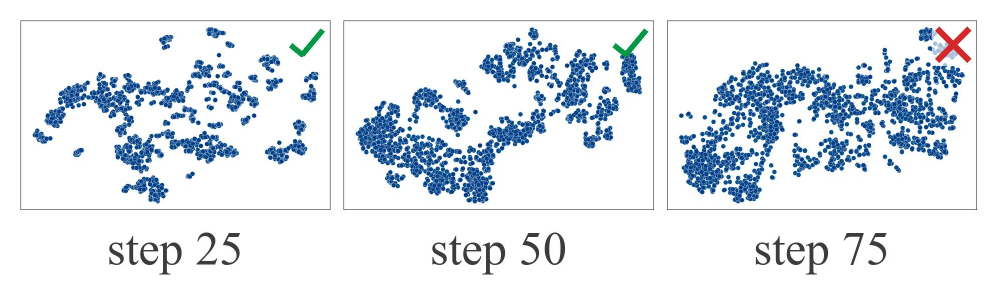}
    \caption{Hidden-state representations and response correctness across training epochs. The query is answered correctly at epochs 25 and 50, but incorrectly at epoch 75.}
    \label{fig:cap_degradation}
    \vspace{-10pt}
\end{wrapfigure}
As discussed previously, although OPSD methods continuously improve the overall performance in the training, some queries that are correctly responded at earlier training stages become incorrectly responded as training proceeds. 
To better understand this phenomenon, for a fixed query, we generate a response and visualize the hidden representations corresponding to the generated response every 25 training epochs. We observe that changes in the geometry of the hidden representations consistently coincide with changes in response correctness.
As illustrated in Figure.~\ref{fig:cap_degradation},
when the model generates a correct response at an earlier training stage (Epoch 25), the corresponding hidden representations exhibit a relatively structured geometric organization, with clearly distinguishable clusters and relatively large distances between clusters. As training proceeds and the response becomes incorrect, the relative distances between clusters gradually decrease, while the boundaries between clusters become increasingly indistinct (Epoch 75).
We hypothesize that as a query transitions from being correctly to incorrectly responded, the reasoning process in the generated response gradually becomes less coherent. When the query is answered correctly, the reasoning tends to exhibit a clear and consistent logical structure, which is reflected in the hidden representation space as distinct and well-separated clusters corresponding to different reasoning patterns. As the reasoning becomes increasingly disorganized, these patterns become less distinguishable, causing the corresponding representations to become more entangled and eventually leading to an incorrect response.
Interestingly, this phenomenon is consistently observed in existing OPSD methods, such as OPSD\citep{zhao2026self}, SDPO\citep{hubotter2026reinforcement}, RLSD\citep{yang2026self}, SRPO\citep{li2026unifying}.
These observations suggest a strong connection between the geometry of the hidden representation and the stability of the model's acquired capabilities during self-improvement.

\subsection{Theoretical Analysis for Capability Degradation}
\label{sec:theory_geometry}

To further substantiate the relationship between hidden representation geometry and response correctness, we conduct a theoretical analysis.

Suppose the LLM undergoing on-policy self-distillation is denoted by $\pi_{\theta}(\cdot \mid q, y_{<t})$,
where $\pi_{\theta} \in\Pi_{\mathrm{OPSD}}$, and $\Pi_{\text{OPSD}}$ denote the policy set trained using OPSD. 
 
Let $\pi_{\theta*}(\cdot \mid q, z, y_{<t})$ denote the optimal teacher distribution conditioned on the privileged information $z$, and let $\pi_{\theta*}(\cdot \mid q, y_{<t})$ denote the corresponding optimal student distribution without access to $z$. We define the excess distillation risk as follows.

\begin{definition}[The Excess Distillation Risk] 
The excess distillation risk of the student model $\pi_{\theta}(\cdot \mid q, y_{<t})$ is defined as the difference between its distillation risk with respect to the optimal teacher and that achieved by the optimal student, i.e.,
\vspace{-0.1cm}
\label{def:excess_risk}

\begin{equation}
\begin{aligned}
\mathcal{E}_{\text{KD}}(\pi_{\theta})=\mathbb{E}_{q \sim \mathcal{D}, y \sim \pi_{\theta}(\cdot \mid q)}
\Bigl[
\sum_{t=1}^{T}
D\left(
\pi_{\theta^*}(\cdot \mid q, z, y_{<t})
\,\middle\|\,
\pi_{\theta}(\cdot \mid q, y_{<t})
\right) \\
-D\left(
\pi_{\theta^*}(\cdot \mid q, z, y_{<t})
\,\middle\|\,
\pi_{\theta^*}(\cdot \mid q, y_{<t})
\right)
\Bigr].
\end{aligned}
\end{equation}
\end{definition}

This quantity measures the gap between the distillation risk of the current student and that of the optimal student, with a smaller value indicating a closer approximation to the optimal.

Let $\hat{\mathcal H}=\{\hat{\mathbf h}_i\}_{i=1}^{T}$ and $\mathcal H^*=\{\mathbf h_i^*\}_{i=1}^{T}$ denote the hidden representations produced by the current teacher $\pi_{\theta}(\cdot\mid q,z,y_{<t})$ and the optimal teacher $\pi_{\theta^*}(\cdot\mid q,z,y_{<t})$, respectively. 
Their midpoint representations are denoted by $\mathcal H^\dagger=\{\mathbf h_i^\dagger\}_{i=1}^{T}$, where $\mathbf h_i^\dagger=\frac{(\hat{\mathbf h}_i+\mathbf h_i^*)}{2}$. Suppose that $\mathcal H^\dagger$ is partitioned into $K$ clusters, denoted by $\mathcal C^\dagger=\{C_1^\dagger,\ldots,C_K^\dagger\}$. 
The center of is cluster $C_k^\dagger$ is $\mu_k^\dagger = \frac{1}{|C_k^\dagger|} \sum_{\mathbf{h}_i^\dagger\in C_k^\dagger} \mathbf{h}_i^\dagger$. 
Let $\mathcal H=\{\mathbf h_i\}_{i=1}^{T}$ denote the student hidden representations corresponding to the current-teacher representations $\{\hat{\mathbf h}_i\}_{i=1}^{T}$. For each cluster $C_k^\dagger$, we define $\mathcal X_k=\{\mathbf h_i\mid\mathbf h_i^\dagger\in C_k^\dagger\}$. That is, if the midpoint $\mathbf h_i^\dagger$ associated with a current-teacher representation $\hat{\mathbf h}_i$ belongs to $C_k^\dagger$, its corresponding student representation $\mathbf h_i$ is assigned to $\mathcal X_k$. The center of is cluster $\mathcal X_k$ is $\mu_k^{\mathcal{X}}=\frac{1}{|\mathcal{X}_k|}\sum_{h_i\in\mathcal{X}_k}\mathbf{h}_i$. Let $\mu=\frac{1}{T}\sum_i^T\mathbf{h}_i$ denote the mean of hidden representations. Based on the definition, we have

\begin{theorem}[Excess Risk Upper Bound]
\label{thm:excess_risk}

\tjqq{
\begin{equation}
\begin{aligned}
\mathcal{E}_{\text{KD}}(\pi_{\theta}) \leq \frac{2||M||_2}{N|\mathcal{V}|}\sum_N\sup_{\pi_{\theta}\in \Pi_\mathrm{OPSD}}|\Delta|[\sqrt{\frac{Td}{2}+\frac{1}{2}\sum_{|\mathcal{C}^\dagger|}\sum_{{\mathbf{h}_i}\in \mathcal{X}_k}||\mathbf{h}_i-\mu_k^{\mathcal{X}}||_2^2-\frac{1}{2}\sum_{|\mathcal{C}^\dagger|}|\mathcal{C}^\dagger_k|||\mu_k^{\mathcal{X}}-\mu||^2_2}+\\
\sqrt{\sum_{|\mathcal{C}^\dagger|}|\mathcal{C}_k^\dagger|||\mu_k^{\mathcal{X}}-\mu_k^{\dagger}||^2_2+\sum_{|\mathcal{C}^\dagger|}\sum_{{\mathbf{h}_i^{\dagger}}\in \mathcal{X}_k}||\mathbf{h}_i^{\dagger}-\mu_k^{\dagger}||_2^2}]
\end{aligned}
\end{equation}}
where  $N$ denote the number of samples, $|\Delta|=$  $\sqrt{\sum_i||\hat{\mathbf{h}_i}-{\mathbf{h}_i}^*||_2^2}$, $M=W^T(I_{|\mathcal{V}|}-\frac{1}{|\mathcal{V}|}\mathbf{1}_{|\mathcal{V}|}\mathbf{1}_{|\mathcal{V}|}^T)W$. The detail proof could be found in Appendix \tjqq{\ref{proof:excess_risk}}
\end{theorem}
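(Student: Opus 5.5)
The plan is to reduce the excess distillation risk to a statement about the final-layer logits, and then to hidden representations through the unembedding matrix $W$. I will assume $D$ is the forward KL and that each distribution is the softmax of $W\mathbf h$. For each position $t$, write the two terms of Definition~\ref{def:excess_risk} as cross-entropies against the same target $\pi_{\theta^*}(\cdot\mid q,z,y_{<t})$. The teacher-entropy parts then cancel, and the summand becomes a difference of log-partition functions plus a linear term in the logit difference: $\langle p^*, W(\mathbf h_i^*-\mathbf h_i)\rangle + \log Z(W\mathbf h_i)-\log Z(W\mathbf h_i^*)$.

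Softmax is invariant to constant shifts of the logits, so only the centered logits $P W\mathbf h$ matter, where $P=I_{|\mathcal V|}-\frac{1}{|\mathcal V|}\mathbf 1\mathbf 1^T$. Squared norms of centered logit differences are therefore $\mathbf v^T M\mathbf v$ with $M=W^TPW$, and they are controlled by $\|M\|_2\|\mathbf v\|_2^2$. The $1/|\mathcal V|$ factor should come from the Lipschitz or variance bound of the log-sum-exp gradient on centered logits.

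Next I handle the empirical expectation over $N$ samples. I would bound $\mathcal E_{\text{KD}}$ by taking the supremum over $\Pi_{\mathrm{OPSD}}$ of each per-sample term, which is where $\frac{1}{N}\sum_N\sup$ appears. A symmetrization or contraction step then produces the factor $2$.

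Within one sample, the key move is a mean-value (polarization) step around the midpoint $\mathbf h^\dagger$. The difference between the student–teacher gap and the optimal-student gap is written as an inner product of the form $\langle \hat{\mathbf h}-\mathbf h^*,\, M(\cdot)\rangle$. This uses the identity $a^2-b^2=(a-b)(a+b)$ with $(a+b)/2$ equal to the midpoint, so the midpoint representations $\mathbf h_i^\dagger$ enter naturally. Cauchy–Schwarz over the $T$ positions then gives $|\Delta|=\sqrt{\sum_i\|\hat{\mathbf h}_i-\mathbf h_i^*\|^2}$ times a square root of a sum of squared norms.

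That second factor is split by the triangle inequality into two pieces. The first piece is $\sqrt{\sum_i\|\mathbf h_i\|^2}$-type, for the student representations. The second is $\sqrt{\sum_i\|\mathbf h_i-\mathbf h_i^\dagger\|^2}$-type, for the student-to-midpoint discrepancy.

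For the first piece, I would apply the ANOVA (Huygens) decomposition of total scatter around $\mu$ into within-cluster scatter over the $\mathcal X_k$ and between-cluster scatter $\sum_k|\mathcal C_k^\dagger|\|\mu_k^{\mathcal X}-\mu\|^2$. I would then use a normalization assumption on the hidden states (e.g.\ a layer-norm-type bound giving $\|\mathbf h_i\|^2\le d$) to express the bounded total as $\frac{Td}{2}+\frac12(\text{within})-\frac12(\text{between})$. This presumably comes from averaging the bound $\sum\|\mathbf h_i\|^2\le Td$ with the scatter identity.

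For the second piece, I would insert the cluster centers, writing $\mathbf h_i-\mathbf h_i^\dagger=(\mathbf h_i-\mu_k^{\mathcal X})+(\mu_k^{\mathcal X}-\mu_k^\dagger)+(\mu_k^\dagger-\mathbf h_i^\dagger)$. Expanding, the cross terms with the centers vanish within each cluster, which yields the terms $|\mathcal C_k^\dagger|\|\mu_k^{\mathcal X}-\mu_k^\dagger\|^2$ and $\sum\|\mathbf h_i^\dagger-\mu_k^\dagger\|^2$. The within-student scatter is either absorbed or bounded together with the first piece.

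The main obstacle is the first, logit-level step: obtaining a genuinely linear-in-$\Delta$ bound with exactly the constant $\frac{2\|M\|_2}{|\mathcal V|}$. The log-partition difference is not linear, so it has to be controlled with a centered-softmax Lipschitz estimate. I would first try bounding $\|\nabla\log Z - \tfrac{1}{|\mathcal V|}\mathbf 1\|$ and the covariance of the softmax by $P/|\mathcal V|$-type quantities. If that fails, I would accept an extra boundedness assumption on the logits so that this constant emerges.
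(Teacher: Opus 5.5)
Your later steps (centering, $M=W^\top(I_{|\mathcal V|}-\frac{1}{|\mathcal V|}\mathbf 1\mathbf 1^\top)W$, polarization, Cauchy--Schwarz over positions, the scatter identity with a norm bound) match the paper. The argument breaks at the first step, which is where the current teacher should enter.

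\textbf{The main gap.} The definition of $\mathcal E_{\text{KD}}$ compares the current student $p_t=\pi_\theta(\cdot\mid q,y_{<t})$ and the optimal student $p_t^*=\pi_{\theta^*}(\cdot\mid q,y_{<t})$ against the \emph{same} target $q_t^*=\pi_{\theta^*}(\cdot\mid q,z,y_{<t})$. The current teacher $q_t=\pi_\theta(\cdot\mid q,z,y_{<t})$, and hence $\hat{\mathbf h}_i$, never appears. Your cross-entropy identity is right only if its $\mathbf h_i^*$ is read as the \emph{optimal student's} representation; in the theorem, $\mathbf h_i^*$ belongs to the optimal teacher. Let $\mathbf z^*,\mathbf z_p,\mathbf z_{p^*}$ be the centered logits of $q_t^*,p_t,p_t^*$. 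Polarizing the definition's difference directly gives, to second order, $\frac{1}{|\mathcal V|}\langle\mathbf z_{p^*}-\mathbf z_p,\,\mathbf z^*-\frac12(\mathbf z_p+\mathbf z_{p^*})\rangle$. That is a student--student discrepancy paired with a midpoint of the two students, not $\hat{\mathbf h}_i-\mathbf h_i^*$ and $\mathbf h_i^\dagger$.

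The missing idea is the ERM-style decomposition the paper uses for the per-query excess risk $\mathcal E_{q,\theta}=\sum_t[\mathrm{KL}(q_t^*\|p_t)-\mathrm{KL}(q_t^*\|p_t^*)]$:
\[
\begin{aligned}
\mathcal E_{q,\theta}={}&\sum_t\bigl[\mathrm{KL}(q_t^*\|p_t)-\mathrm{KL}(q_t\|p_t)\bigr]+\sum_t\bigl[\mathrm{KL}(q_t\|p_t)-\mathrm{KL}(q_t\|p_t^*)\bigr]\\
&+\sum_t\bigl[\mathrm{KL}(q_t\|p_t^*)-\mathrm{KL}(q_t^*\|p_t^*)\bigr].
\end{aligned}
\]
The middle term is taken to be $\le 0$, because the trained student is treated as a minimizer of the distillation loss against its own teacher. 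Each outer term is bounded by the uniform deviation $\sup_f\bigl|\sum_t[\mathrm{KL}(q_t^*\|p_f)-\mathrm{KL}(q_t\|p_f)]\bigr|$ over students $p_f\in\Pi_{\mathrm{OPSD}}$, with both teachers held fixed. This is where the supremum and the factor $2$ come from, not from symmetrization or contraction. Only this difference of losses against two \emph{teachers} polarizes into $\frac{1}{|\mathcal V|}\bigl|\sum_t(\mathbf h_t-\mathbf h_t^\dagger)^\top M(\mathbf h_t^*-\hat{\mathbf h}_t)\bigr|$. Cauchy--Schwarz then yields $\|M\|_2\,|\Delta|\,\bigl(\sum_i\|\mathbf h_i-\mathbf h_i^\dagger\|_2^2\bigr)^{1/2}$. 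Without this decomposition and the optimality assumption on the middle term, your route cannot produce $|\Delta|$ or $\mathcal H^\dagger$.

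\textbf{The factor $1/|\mathcal V|$.} No global Lipschitz or covariance estimate gives it: $\mathrm{diag}(p)-pp^\top$ can have operator norm $1/2$, and mere boundedness of the logits does not help. The paper expands to second order at the uniform distribution. With centered logits $\|\mathbf z\|_2\to0$, $p_i=\frac{1}{|\mathcal V|}(1+z_i+O(\|\mathbf z\|_2^2))$, so $\mathrm{KL}(a\|b)\approx\frac{1}{2|\mathcal V|}\|\mathbf z_a-\mathbf z_b\|_2^2$ and higher-order terms are dropped. You should adopt this small-logit regime explicitly; the bound is then a leading-order statement.

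\textbf{The two square roots.} They come from one triangle inequality on the stacked vectors, using $\mathbf h_i-\mathbf h_i^\dagger=(\mathbf h_i-\mu_k^{\mathcal X})+(\mu_k^{\mathcal X}-\mathbf h_i^\dagger)$. They do not come from a ``$\sqrt{\sum_i\|\mathbf h_i\|^2}$'' piece.
\begin{itemize}
\item The first root is $\sqrt{W_{\mathcal X}}$ with $W_{\mathcal X}=\sum_k\sum_{\mathbf h_i\in\mathcal X_k}\|\mathbf h_i-\mu_k^{\mathcal X}\|_2^2$. Since $W_{\mathcal X}+\sum_k|\mathcal C_k^\dagger|\|\mu_k^{\mathcal X}-\mu\|_2^2=\sum_i\|\mathbf h_i-\mu\|_2^2\le Td$ (RMSNorm without scale), we get $W_{\mathcal X}\le\frac{Td}{2}+\frac12W_{\mathcal X}-\frac12\sum_k|\mathcal C_k^\dagger|\|\mu_k^{\mathcal X}-\mu\|_2^2$. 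Your averaging guess here is right.
\item The second root expands exactly, because $\mu_k^{\mathcal X}-\mu_k^\dagger$ is constant on each cluster and $\sum_{\mathbf h_i^\dagger\in C_k^\dagger}(\mathbf h_i^\dagger-\mu_k^\dagger)=0$.
\end{itemize}
Expanding the squared three-term insertion instead, as you propose, leaves the cross term $-2\sum_k\sum_i\langle\mathbf h_i-\mu_k^{\mathcal X},\mathbf h_i^\dagger-\mu_k^\dagger\rangle$, which does not vanish in general.
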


From Theorem \ref{thm:excess_risk}, we observe that the excess risk upper bound is affected by both intra-cluster compactness and inter-cluster separation of the learned representations. Specifically, reducing the intra-cluster variance term 
$\|\mathbf{h}_i-\mu_k^{\mathcal{X}}\|_2^2$
and increasing the inter-cluster dispersion term
$\|\mu_k^{\mathcal{X}}-\mu\|_2^2$
can tighten the excess risk upper bound. Meanwhile, the term
$\|\mu_k^{\mathcal{X}}-\mu_k^{\dagger}\|_2^2$
reflects the deviation between the learned student representation and the teacher and optimal teacher representation. Therefore, preventing capability degradation during OPSD requires not only maintaining the alignment with the teacher representation, but also preserving compact intra-cluster compactness and clear inter-cluster separation in the representation space.

\section{Methodology}

As discussed previously, we observe that increasingly blurred boundaries between hidden representation clusters coincide with the progressive degradation of model capabilities during OPSD training. Motivated by this observation, we propose Representation Geometry Regularization (RGR) to preserve clear boundaries between hidden representation clusters. To achieve this goal, we consider two complementary geometric properties: (1) intra-cluster compactness, where representations within each cluster remain sufficiently compact, and (2) inter-cluster separation, where different clusters maintain sufficient separation from each other. In the following, we discuss how to preserve the two properties in detail and derive the corresponding regularization objectives.

\subsection{Intra-cluster compactness}

According to previous works , the intra-cluster compactness of hidden representations can be evaluated by the Within-Cluster Sum of Squares (WCSS)\citep{lloyd1982least}, defined as the sum of squared distances between each hidden representation and the center of its corresponding cluster. 
However, directly minimizing WCSS may lead to excessive compactness within each cluster. In the extreme case, hidden representations within the same cluster may become overly similar, reducing the discriminative information encoded in the representation space. Therefore, we introduce the teacher model as a reference and encourage the WCSS of the student representations within each cluster to remain close to that of the teacher representations.

Given the query $q$, the hidden representations of the response generated by the student model are denoted as $\{\mathbf{h}_1,\ldots,\mathbf{h}_T\}$, while the corresponding hidden representations of the reference model are denoted as $\{\mathbf{h}_1^{\text{ref}},\ldots,\mathbf{h}_T^{\text{ref}}\}$.
\tjqq{Suppose that a cluster $C^\mathrm{ref}_k$ in the reference model's representation space corresponds to a cluster $C_k^\mathrm{align}$ in the student model's representation space, with cluster centers $\mathbf{u}_k^\mathrm{ref}$ and $\mathbf{u}_k^\mathrm{align}$, respectively.} To align the intra-cluster compactness of the student model with that of the reference model, the learning target for intra-cluster compactness is defined as:

\begin{equation}
\label{eq:intra_ini}
\mathcal{L}_{\mathrm{intra}}=
\sum_{|\mathcal{C}_\mathrm{ref}|} \left|
\sum_{
\mathbf{h}_{i}\in C_{k}^\mathrm{align},
}
\left\|
\mathbf{h}_{i}-\boldsymbol{\mu}_{k}^\mathrm{align}
\right\|_2^2
-
\sum_{\mathbf{h}_i^\mathrm{ref}\in C_k^\mathrm{ref}}
\left\|
\mathbf{h}_i^\mathrm{ref}-\boldsymbol{\mu}_k^\mathrm{ref}
\right\|_2^2
\right|,
\end{equation}

where $|\mathcal{C}_\mathrm{ref}|$ denotes the number of clusters.

However, directly optimizing Eq.~\ref{eq:intra_ini} is difficult. To address this, we stack the hidden representations of the student model as $\mathbf{H}=
[\mathbf{h}_1,\ldots,\mathbf{h}_T] \in\mathbb{R}^{T\times d}$, where each hidden representation is the row vector of $\mathbf{H}$. 
Similarly, we stack the hidden representations of the student model as $\mathbf{H}_\mathrm{ref}=
[\mathbf{h}_1^\mathrm{ref},\ldots,\mathbf{h}_T^\mathrm{ref}]$, where each hidden representation is the row vector of $\mathbf{H}_\mathrm{ref}$. 
Under the above definitions and assumptions, we have:

\tjqq{\begin{theorem}[Intra-cluster Compactness Upper Bound]
\label{thm:cluster}
The upper bound of the intra-cluster compactness learning target is 

\begin{equation}
    \mathcal{L}_\mathrm{intra}\leq (T-|\mathcal{C}_\mathrm{ref}|)^{\frac{1}{2}}||\mathbf{H}\mathbf{H}^\top-\mathbf{H}_\mathrm{ref} \mathbf{H}_\mathrm{ref}^T||_F,
\end{equation}
\end{theorem}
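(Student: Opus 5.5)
The plan is to rewrite each cluster's within-cluster sum of squares as a linear functional of the Gram matrix, and then bound the resulting sum by Cauchy--Schwarz twice. I assume, as the alignment $C_k^{\mathrm{align}}\leftrightarrow C_k^{\mathrm{ref}}$ implicitly requires, that both clusterings use the same token index sets. That is, there is a partition $\{I_1,\dots,I_K\}$ of $\{1,\dots,T\}$, with $K=|\mathcal C_{\mathrm{ref}}|$ and $n_k=|I_k|\ge 1$, such that $C_k^{\mathrm{align}}=\{\mathbf h_i: i\in I_k\}$ and $C_k^{\mathrm{ref}}=\{\mathbf h_i^{\mathrm{ref}}: i\in I_k\}$. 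The cluster centers $\boldsymbol\mu_k^{\mathrm{align}}$ and $\boldsymbol\mu_k^{\mathrm{ref}}$ are the corresponding means.

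\textbf{Step 1 (Gram representation).} Let $G=\mathbf H\mathbf H^\top$ and $G_{\mathrm{ref}}=\mathbf H_{\mathrm{ref}}\mathbf H_{\mathrm{ref}}^\top$. For each $k$ define
\[
A_k=\mathrm{diag}(\mathbf 1_{I_k})-\tfrac{1}{n_k}\mathbf 1_{I_k}\mathbf 1_{I_k}^\top .
\]
Expanding the squared distances to the mean gives
\[
\sum_{i\in I_k}\|\mathbf h_i-\boldsymbol\mu_k^{\mathrm{align}}\|_2^2=\sum_{i\in I_k}\|\mathbf h_i\|_2^2-\tfrac1{n_k}\Bigl\|\sum_{i\in I_k}\mathbf h_i\Bigr\|_2^2=\mathrm{tr}(\mathbf H^\top A_k\mathbf H)=\langle G,A_k\rangle_F .
\]
The same identity holds for the reference representations. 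Hence, writing $\Delta=G-G_{\mathrm{ref}}$, each summand in Eq.~\ref{eq:intra_ini} equals $|\langle \Delta, A_k\rangle_F|$.

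\textbf{Step 2 (per-cluster bound).} The matrix $A_k$ is supported on the block $I_k\times I_k$, so $\langle\Delta,A_k\rangle_F=\langle\Delta_{kk},A_k\rangle_F$, where $\Delta_{kk}$ is the $I_k\times I_k$ block of $\Delta$. Cauchy--Schwarz gives $|\langle\Delta_{kk},A_k\rangle_F|\le\|A_k\|_F\|\Delta_{kk}\|_F$. On its block, $A_k$ is the centering projector, which has rank $n_k-1$. Therefore $\|A_k\|_F^2=\mathrm{tr}(A_k)=n_k-1$.

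\textbf{Step 3 (aggregation).} A second Cauchy--Schwarz over $k$ yields
\[
\mathcal L_{\mathrm{intra}}\le\sum_{k}\sqrt{n_k-1}\,\|\Delta_{kk}\|_F\le\Bigl(\sum_k(n_k-1)\Bigr)^{1/2}\Bigl(\sum_k\|\Delta_{kk}\|_F^2\Bigr)^{1/2}.
\]
Here $\sum_k(n_k-1)=T-|\mathcal C_{\mathrm{ref}}|$. Moreover, the diagonal blocks are disjoint parts of $\Delta$, so $\sum_k\|\Delta_{kk}\|_F^2\le\|\Delta\|_F^2$. Combining these gives $\mathcal L_{\mathrm{intra}}\le(T-|\mathcal C_{\mathrm{ref}}|)^{1/2}\|\mathbf H\mathbf H^\top-\mathbf H_{\mathrm{ref}}\mathbf H_{\mathrm{ref}}^\top\|_F$, which is the claimed bound.

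The only delicate point is Step 1's assumption that the student and reference clusters share index sets. Without it, the WCSS terms would not be linear functionals of a common centering matrix, and the difference would not reduce to $\langle\Delta,A_k\rangle_F$. I would state this correspondence explicitly as the meaning of ``$C_k^{\mathrm{ref}}$ corresponds to $C_k^{\mathrm{align}}$,'' mirroring how $\mathcal X_k$ was defined from $C_k^\dagger$ before Theorem~\ref{thm:excess_risk}. The remaining steps are routine.
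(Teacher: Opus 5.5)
Your proposal is correct and takes essentially the same route as the paper's proof. Both arguments express each cluster's WCSS as the centering matrix paired with the corresponding principal submatrix of the Gram matrix, apply Cauchy--Schwarz per cluster using $\|A_k\|_F=\sqrt{n_k-1}$ and again across clusters to obtain $(T-|\mathcal{C}_\mathrm{ref}|)^{1/2}$, and bound the disjoint diagonal blocks by $\|\mathbf{H}\mathbf{H}^\top-\mathbf{H}_\mathrm{ref}\mathbf{H}_\mathrm{ref}^\top\|_F$. Your shared-index-set assumption is what the paper uses implicitly when it treats $G_k^{align}$ and $G_k^{ref}$ as matching principal submatrices, and your $\mathrm{diag}(\mathbf 1_{I_k})-\frac{1}{n_k}\mathbf 1_{I_k}\mathbf 1_{I_k}^\top$ is the correct form of the centering matrix the paper writes loosely.
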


where $|\mathcal{C}_\mathrm{ref}|$ denotes the number of clusters of the reference model.}

The details of the proof can be found in Appendix \ref{proof:cluster}.

\subsection{Inter-cluster separation}


We construct a Laplacian matrix\citep{von2007tutorial} from the pairwise relationships among all hidden representations and use its low-end spectrum, corresponding to the smallest eigenvalues of the Laplacian matrix, to evaluate the separability between clusters. However, directly maximizing these low-end eigenvalues is undesirable, as excessively large inter-cluster separation can over-expand the representation space and distort the relative geometric relationships among clusters.
Therefore, we use the teacher model as a reference model to encourage the student model to preserve its inter-cluster separation, thereby preventing undesirable geometric distortion during optimization.

Specifically, given the query $q$, the hidden representations of the response generated by the student model are denoted as $\{\mathbf{h}_1,\ldots,\mathbf{h}_T\}$, while the corresponding hidden representations of the reference model are denoted as $\{\mathbf{h}_1^{\text{ref}},\ldots,\mathbf{h}_T^{\text{ref}}\}$. 
We define the Gaussian kernel matrices of the hidden representations of the student model and the reference model as:
\begin{equation}
\begin{aligned}
    A_{ij}&=
    \begin{cases}
    \exp\left(
    -\dfrac{\left\|\mathbf{h}_i-\mathbf{h}_j\right\|_2^2}
    {2\sigma^2}
    \right), & i\neq j,\\
    0, & i=j,
    \end{cases}
    \qquad
    A_{ij}^{ref}&=
    \begin{cases}
    \exp\left(
    -\dfrac{\left\|\mathbf{h}_i^\mathrm{ref}-\mathbf{h}_j^\mathrm{ref}\right\|_2^2}
    {2\sigma^2}
    \right), & i\neq j,\\
    0, & i=j,
    \end{cases}
\end{aligned}
\end{equation}
where $\sigma$ denotes the standard deviation of the Gaussian kernel.

The corresponding degree matrices of the student and reference Gaussian kernel matrices are denoted by $\mathbf{D}$ and $\mathbf{D}^{\mathrm{ref}}$, respectively, where $D_{ii}=\sum_{j=1}^{T}A_{ij}$ and $D_{ii}^{\mathrm{ref}}=\sum_{j=1}^{T}A_{ij}^{\mathrm{ref}}$. The unnormalized Laplacian matrices of the student and reference hidden representations are then defined as $\mathbf{L}=\mathbf{D}-\mathbf{A}$ and $\mathbf{L}^{\mathrm{ref}}=\mathbf{D}^{\mathrm{ref}}-\mathbf{A}^{\mathrm{ref}}$, respectively. 
The corresponding eigenvalue vectors are

\begin{equation}
\begin{aligned}
\boldsymbol{\lambda}
&=
\left[
\lambda_1(\mathbf{L}),\ldots,
\lambda_T(\mathbf{L})
\right]^\top,
\\
\boldsymbol{\lambda}^{\mathrm{ref}}
&=
\left[
\lambda_1(\mathbf{L}^{\mathrm{ref}}),\ldots,
\lambda_T(\mathbf{L}^{\mathrm{ref}})
\right]^\top,
\end{aligned}
\end{equation}

where the eigenvalues are arranged in nondecreasing order. The learning target for inter-cluster separation could be represented by the difference between the eigenvalue vectors  of the student and the reference, that is 

\begin{equation}
    \mathcal{L}_\mathrm{inter} =  \left\|
    \boldsymbol{\lambda}
    -
    \boldsymbol{\lambda}^\mathrm{ref}
    \right\|_2.
\end{equation}

Then, by the Hoffman–Wielandt inequality \citep{hoffman1953variation}, we obtain

\begin{equation}
    \mathcal{L}_\mathrm{inter}
    \leq
    (1+\sqrt{T})||\mathbf{L}-\mathbf{L^{ref}}||_F
\end{equation}
Under the above definitions and assumptions, we have
\begin{theorem}[Inter-cluster Separation Upper Bound] The upper bound of the inter-cluster separation could be induced by applying Taylor expansion and the Newton–Schwarz inequality: 
\label{thm:inter}
\label{}

\begin{equation}
\mathcal{L}_\mathrm{inter}
\leq
\left(
\frac{1+\sqrt{T}}{\sigma}
\right)^2
\left\|
\mathbf{H}\mathbf{H}^{\top}
-
\mathbf{H}_{\mathrm{ref}}\mathbf{H}_{\mathrm{ref}}^{\top}
\right\|_F,
\end{equation}


\end{theorem}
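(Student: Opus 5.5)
The plan is to bound the eigenvalue gap by $\|\mathbf L-\mathbf L^{\mathrm{ref}}\|_F$, then trace the perturbation back through the Laplacian, the kernel matrix and the Gram matrix $\mathbf G=\mathbf H\mathbf H^\top$. Write $\mathbf G^{\mathrm{ref}}=\mathbf H_{\mathrm{ref}}\mathbf H_{\mathrm{ref}}^\top$ and $\Delta\mathbf G=\mathbf G-\mathbf G^{\mathrm{ref}}$.

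\textbf{Step 1 (spectral step).} Since $\mathbf L$ and $\mathbf L^{\mathrm{ref}}$ are real symmetric and both spectra are sorted nondecreasingly, the Hoffman--Wielandt inequality in its sharp form gives
\[
\mathcal L_{\mathrm{inter}}=\|\boldsymbol\lambda-\boldsymbol\lambda^{\mathrm{ref}}\|_2\le\|\mathbf L-\mathbf L^{\mathrm{ref}}\|_F .
\]
The factor $(1+\sqrt T)$ in the displayed inequality preceding the theorem is not needed here. If it were kept, the final bound would carry $(1+\sqrt T)^3$ rather than $(1+\sqrt T)^2$. So I use the sharp inequality and let the two factors of $(1+\sqrt T)$ come from Steps 2 and 3.

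\textbf{Step 2 (Laplacian to kernel).} Put $\Delta\mathbf A=\mathbf A-\mathbf A^{\mathrm{ref}}$. Then $\mathbf L-\mathbf L^{\mathrm{ref}}=(\mathbf D-\mathbf D^{\mathrm{ref}})-\Delta\mathbf A$. The degree difference is diagonal with diagonal vector $\Delta\mathbf A\mathbf 1$, and by Cauchy--Schwarz (the ``Newton--Schwarz'' step)
\[
\|\Delta\mathbf A\mathbf 1\|_2\le\sqrt T\,\|\Delta\mathbf A\|_F .
\]
Hence $\|\mathbf L-\mathbf L^{\mathrm{ref}}\|_F\le(1+\sqrt T)\|\Delta\mathbf A\|_F$.

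\textbf{Step 3 (kernel to Gram; the main step).} For $i\neq j$ write
\[
A_{ij}=\exp\!\bigl(-(G_{ii}+G_{jj}-2G_{ij})/(2\sigma^2)\bigr),
\]
and similarly for $A^{\mathrm{ref}}_{ij}$. Both exponents are nonpositive, and $\exp$ is $1$-Lipschitz on $(-\infty,0]$; this is the Taylor / mean-value step. Therefore
\[
|\Delta A_{ij}|\le\frac{1}{2\sigma^2}\,\bigl|\Delta G_{ii}+\Delta G_{jj}-2\Delta G_{ij}\bigr|,
\]
and the diagonal of $\Delta\mathbf A$ vanishes. Let $\mathbf d=\operatorname{diag}(\Delta\mathbf G)$. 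The matrix with entries $\Delta G_{ii}+\Delta G_{jj}$ is $\mathbf d\mathbf 1^\top+\mathbf 1\mathbf d^\top$, whose Frobenius norm is at most $2\sqrt T\|\mathbf d\|_2\le2\sqrt T\|\Delta\mathbf G\|_F$. The remaining term contributes $2\|\Delta\mathbf G\|_F$. Hence
\[
\|\Delta\mathbf A\|_F\le\frac{2\sqrt T+2}{2\sigma^2}\,\|\Delta\mathbf G\|_F=\frac{1+\sqrt T}{\sigma^2}\,\|\Delta\mathbf G\|_F .
\]

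\textbf{Combining.} Chaining Steps 1--3 yields
\[
\mathcal L_{\mathrm{inter}}\le\frac{(1+\sqrt T)^2}{\sigma^2}\,\|\mathbf H\mathbf H^\top-\mathbf H_{\mathrm{ref}}\mathbf H_{\mathrm{ref}}^\top\|_F .
\]

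The main obstacle is Step 3. One has to see that pairwise squared distances are linear in $\mathbf G$, and then bound the Frobenius norm of the rank-two diagonal-broadcast term without losing more than a factor of $\sqrt T$. Everything else is bookkeeping of constants.
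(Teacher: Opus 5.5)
Your proof is correct and follows the same chain as the paper's appendix argument:
\begin{itemize}
\item the Laplacian difference is reduced to the kernel difference with factor $(1+\sqrt{T})$, using the triangle inequality and Cauchy--Schwarz on the degree term;
\item the exponential (Taylor/Lipschitz) step contributes the factor $1/(2\sigma^2)$;
\item the squared-distance-to-Gram step contributes $2(1+\sqrt{T})$;
\item the argument closes with the sharp Hoffman--Wielandt inequality.
\end{itemize}
You are also right that the extra $(1+\sqrt{T})$ in the main-text Hoffman--Wielandt display has to be dropped to reach $(1+\sqrt{T})^2$, since the appendix implicitly uses the sharp form. Your write-up also spells out the broadcast-term bound and the Lipschitz justification, which the paper leaves unstated.
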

The proof is provided in Appendix~\ref{proof:inter}.

\subsection{Optimization}

Building upon the previous analysis, preserving the intra-cluster compactness and inter-cluster separation of the hidden representation space requires minimizing $\mathcal{L}_{\mathrm{intra}}$ and $\mathcal{L}_{\mathrm{inter}}$. From the above theoretical analysis, we show that both geometric discrepancies can be controlled by the discrepancy between the Gram matrices of the student and reference hidden representations. Specifically, we have
$
\mathcal{L}_{\mathrm{intra}}
\leq
C_{\mathrm{intra}}
\left\|
\mathbf{H}\mathbf{H}^{\top}
-
\mathbf{H}_{\mathrm{ref}}\mathbf{H}_{\mathrm{ref}}^{\top}
\right\|_F
$
and
$
\mathcal{L}_{\mathrm{inter}}
\leq
C_{\mathrm{inter}}
\left\|
\mathbf{H}\mathbf{H}^{\top}
-
\mathbf{H}_{\mathrm{ref}}\mathbf{H}_{\mathrm{ref}}^{\top}
\right\|_F,
$
where $C_{\mathrm{intra}}$ and $C_{\mathrm{inter}}$ are constants independent of the student representations. Therefore, minimizing the Gram matrix discrepancy provides a unified objective for preserving both intra-cluster compactness and inter-cluster separation. We thus define the geometry-preserving objective as

\begin{equation}
\mathcal{L}_{\mathrm{geo}}
=
\left\|
\mathbf{H}\mathbf{H}^{\top}
-
\mathbf{H}_{\mathrm{ref}}\mathbf{H}_{\mathrm{ref}}^{\top}
\right\|_F^2.
\end{equation}
In addition to representation geometry, recent studies \citep{jin2026entropy} on on-policy distillation have highlighted the importance of controlling the entropy of the model output distribution during optimization,which motivates us to explicitly incorporate an entropy-related objective into the optimization process. Accordingly, we jointly preserve the hidden representation geometry and regulate the output distribution entropy during OPSD optimization.
Combining the original OPSD objective with the geometry-preserving and entropy-related objectives, we formulate the overall training objective as
\begin{equation}
\label{equ_ab}
\mathcal{L}
=
\mathcal{L}_{\mathrm{OPSD}}
+
\alpha\mathcal{L}_{\mathrm{geo}}
+
\beta\mathcal{L}_{\mathrm{ent}},
\end{equation}
where $\mathcal{L}_{\mathrm{OPSD}}$ denotes the original on-policy self-distillation objective, $\mathcal{L}_{\mathrm{geo}}$ denotes the proposed geometry-preserving regularization, and $\mathcal{L}_{\mathrm{ent}} = 
\left\|
\mathbf{H}^{\top}\mathbf{H}
-
\mathbf{H}_{\mathrm{ref}}^{\top}\mathbf{H}_{\mathrm{ref}}
\right\|_F^2$
 denotes the entropy-related objective. The coefficients $\alpha$ and $\beta$ control the contributions of the geometry-preserving and entropy-related terms, respectively.

\section{EXEPERIMENT}
\label{gen_inst}

In this section, we conduct experiments to evaluate the performance of RGR. We aim to answer the following research questions: (1) Is RGR effective for improving the performance of self distillation (RQ1)?
(2) How do different components of RGR contribute to the performance gains (RQ2)?
(3) Can RGR alleviate the degradation of previously acquired capabilities during OPSD optimization, and how does it affect the evolution of hidden representations (RQ3)? 
(4) Can RGR generalize to task-transfer settings and alleviate the degradation of previously acquired capabilities when learning new tasks (RQ4)?

\subsection{Experiment setup}

\paragraph{Datasets.}
Following the dataset setup of OPSD \citep{zhao2026self}, we sample up to 30K problem-solution pairs containing chain-of-thought reasoning from the mathematical reasoning subset of OpenThoughts\citep{guha2506openthoughts}for model training.. 
We conduct evaluation on three representative benchmarks: AIME 2024 \citep{aime24}, AIME 2025 \citep{aime25}, and HMMT 2025 \citep{balunovic2026matharena}. AIME 2024 consists of problems from the 2024 American Invitational Mathematics Examination, covering diverse areas of high-school mathematics. AIME 2025 consists of problems from the 2025 American Invitational Mathematics Examination. HMMT 2025 consists of problems from the Harvard-MIT Mathematics Tournament, covering algebra, combinatorics, geometry, and number theory. 

\paragraph{Baseline.}

To assess the effectiveness of RGR, we integrate it into four representative self-distillation methods, namely, OPSD \citep{zhao2026self}, SDPO \citep{hubotter2026reinforcement}, SRPO \citep{li2026unifying}, and RLSD \citep{yang2026self}, and compare each RGR-enhanced method with its original counterpart. These methods differ in how they optimize the self-distillation objective. Specifically, OPSD employs forward KL divergence \citep{kullback1951information}, with the policy update constrained by token-level KL clipping; SDPO uses Jensen–Shannon divergence \citep{lin1991divergence} to align the student with the teacher; SRPO routes successful responses to reward-based reinforcement and failed responses to self-distillation; and RLSD uses environment-derived rewards to determine the direction of policy updates, while reweighting token-level advantages according to the teacher–student log-probability gap to modulate the update magnitude.

\paragraph{ImplementationDetails.}
Our experiments are conducted on two foundation LLMs: Qwen3-1.7B and Qwen3-4B\citep{yang2025qwen3}, with thinking mode enabled throughout training and validation.  
All experiments are performed on four NVIDIA A40 GPUs. All methods use LoRA \citep{hu2021lora} with a rank of 64 and a scaling factor of 128, with the learning rate set to 5 \(\times\) \(10^{-6}\). The teacher model remains frozen throughout training. For our method, 
we fix the parameter $\alpha$ in Eq.(\ref{equ_ab}) at $2.5 \times 10^{-2}$ and the parameter $\beta$ in Eq.(\ref{equ_ab}) at $5 \times 10^{-2}$
across all benchmarks.

\subsection{Main results (RQ1) }

\begin{wraptable}{r}{0.43\linewidth}
  \vspace{-10pt}
  \scriptsize
  \renewcommand{\arraystretch}{1}
  \setlength{\tabcolsep}{2pt}

  \captionsetup{justification=raggedright, singlelinecheck=false}
  \caption{Performance gains from RGR. Best results in bold; RGR improvements in green. }
  \label{tab:main_results}

  \begin{tabularx}{\linewidth}{
    @{}
    c
    l
    >{\centering\arraybackslash}p{1.0cm}
    >{\centering\arraybackslash}p{1.0cm}
    >{\centering\arraybackslash}p{1.0cm}
    >{\centering\arraybackslash}p{1.1cm}
    @{}
  }
  \toprule
  \textbf{Model} & \textbf{Method}
  & \textbf{AIME24} & \textbf{AIME25} & \textbf{HMMT25} & \textbf{Avg.} \\
    \midrule
  \multirow[c]{8}{*}{\rotatebox[origin=c]{90}{Qwen3-1.7B}}
  & OPSD          & 56.67 & 43.06 & 28.89 & 42.87 \\
  & \cellcolor{oursgray}\textbf{OPSD + RGR}
                  & \cellcolor{oursgray}\textbf{59.89} & \cellcolor{oursgray}\textbf{43.60} & \cellcolor{oursgray}\textbf{30.00}
                  & \cellcolor{oursgray}\textbf{44.50}\gain{1.63} \\
  & SDPO          & 38.61 & 28.61 & 21.39 & 29.54 \\
  & \cellcolor{oursgray}\textbf{SDPO + RGR}
                  & \cellcolor{oursgray}\textbf{41.94} & \cellcolor{oursgray}\textbf{31.39} & \cellcolor{oursgray}\textbf{21.94}
                  & \cellcolor{oursgray}\textbf{31.76}\gain{2.22} \\
  & RLSD          & 50.27 & 39.70 & 25.83 & 38.60 \\
  & \cellcolor{oursgray}\textbf{RLSD + RGR}
                  & \cellcolor{oursgray}\textbf{51.39} & \cellcolor{oursgray}39.17 & \cellcolor{oursgray}25.56
                  & \cellcolor{oursgray}\textbf{38.70}\gain{0.10} \\
  & SRPO          & 37.78 & 30.00 & 20.00 & 29.26 \\
  & \cellcolor{oursgray}\textbf{SRPO + RGR}
                  & \cellcolor{oursgray}\textbf{39.44} & \cellcolor{oursgray}29.72 & \cellcolor{oursgray}\textbf{21.67}
                  & \cellcolor{oursgray}\textbf{30.28}\gain{1.02} \\
    \midrule
  \multirow[c]{8}{*}{\rotatebox[origin=c]{90}{Qwen3-4B}}
  & OPSD          & 74.17 & 68.06 & 45.00 & 62.41 \\
  & \cellcolor{oursgray}\textbf{OPSD + RGR}
                  & \cellcolor{oursgray}\textbf{75.56} & \cellcolor{oursgray}\textbf{70.56} & \cellcolor{oursgray}\textbf{47.50}
                  & \cellcolor{oursgray}\textbf{64.54}\gain{2.13} \\
  & SDPO          & 67.22 & 61.39 & 38.61 & 55.74 \\
  & \cellcolor{oursgray}\textbf{SDPO + RGR}
                  & \cellcolor{oursgray}\textbf{72.50} & \cellcolor{oursgray}\textbf{63.61} & \cellcolor{oursgray}\textbf{41.11}
                  & \cellcolor{oursgray}\textbf{59.07}\gain{3.33} \\
  & RLSD          & 75.00 & 65.00 & 44.44 & 61.48 \\
  & \cellcolor{oursgray}\textbf{RLSD + RGR}
                  &\cellcolor{oursgray} 74.72 & \cellcolor{oursgray}\textbf{66.11} & \cellcolor{oursgray}\textbf{44.72}
                  & \cellcolor{oursgray}\textbf{61.85}\gain{0.37} \\
  & SRPO          & 72.22 & 64.16 & 39.17 & 58.52 \\
  & \cellcolor{oursgray}\textbf{SRPO + RGR}
                  & \cellcolor{oursgray}\textbf{73.33} & \cellcolor{oursgray}\textbf{66.67} & \cellcolor{oursgray}\textbf{41.67}
                  & \cellcolor{oursgray}\textbf{60.56}\gain{2.04} \\
\bottomrule
\end{tabularx}
\end{wraptable}

Following OPSD \citep{zhao2026self}, we evaluate each model by generating 12 responses for each problem and report the average accuracy over the 12 responses (Avg@12).

The results are reported in Table~\ref{tab:main_results}. As it is shown in Table~\ref{tab:main_results}, we can observe that: (1) RGR generally improves different methods across both Qwen3-1.7B and Qwen3-4B. OPSD, SDPO, and SRPO consistently benefit from RGR, demonstrating that representation geometry preservation provides complementary benefits to different optimization objectives. Although a few settings show slight decreases, RGR improves the majority of cases, supporting its general effectiveness; (2) The gains vary across different methods, with larger improvements generally observed for OPSD and SDPO but limited gains for RLSD. This can be attributed to their different optimization mechanisms: OPSD and SDPO directly rely on self-distillation signals for dense token-level supervision, while RLSD mainly uses self-distillation to determine the update magnitude and obtains the update direction from RLVR. Consequently, RGR has more room to benefit methods whose optimization induces stronger changes in the representation space; (3) The improvements also vary across AIME24, AIME25, and HMMT25. This variation may be attributed to the different difficulty levels and reasoning characteristics of these benchmarks, which result in different distributions of correct and incorrect responses during self-improvement. Overall, RGR remains beneficial across diverse mathematical reasoning benchmarks, while its improvement varies with the characteristics of the target dataset.
\subsection{Ablation Study (RQ2)}

To investigate the contribution of each component to RGR, we conduct an ablation study with the following variants:
\begin{itemize} [leftmargin=2em]
    \item \textbf{w/o Ent}: We remove the $\mathcal{L}_{\mathrm{ent}}$ in Eq.(\ref{equ_ab}), removing the constraint on output distribution.
    \item \textbf{w/o Geo}: We remove the $\mathcal{L}_{\mathrm{geo}}$ in Eq.(\ref{equ_ab}), eliminating the constraint on intra-cluster compactness and inter-cluster separation.
\end{itemize}

\begin{wraptable}{r}{0.42\columnwidth}
\centering

\caption{Ablation study of RGR.}
\label{tab:loss_ablation}
\small
\setlength{\tabcolsep}{3pt}
\begin{tabular}{lccc}
\toprule
Method & AIME24 & AIME25 & HMMT25 \\
\midrule
OPSD & 56.67 & 43.06 & 28.89 \\
w/o Ent & 57.22 & 41.39 & 28.33 \\
w/o Geo & 58.61 & 42.22 & 29.17 \\
RGR & \textbf{59.89} & \textbf{43.60} & \textbf{30.00} \\
\bottomrule
\end{tabular}
\end{wraptable}
The results are shown in Table~\ref{tab:loss_ablation} that RGR achieves the best performance across all three benchmarks. From Table~\ref{tab:loss_ablation}, we can also observe that: (1) Removing the geometry constraint (\textbf{w/o Geo}) consistently degrades the performance across all benchmarks, compared with the full RGR. 
This indicates that explicitly preserving intra-cluster compactness and inter-cluster center separation is crucial for maintaining a structured representation space during OPSD optimization. Without this constraint, the hidden representations may become less organized, leading to limited capability improvement.

\begin{wrapfigure}[14]{R}{0.4\columnwidth}
    \centering
    \vspace{-15pt}
    \includegraphics[
        width=\linewidth,
        keepaspectratio
    ]{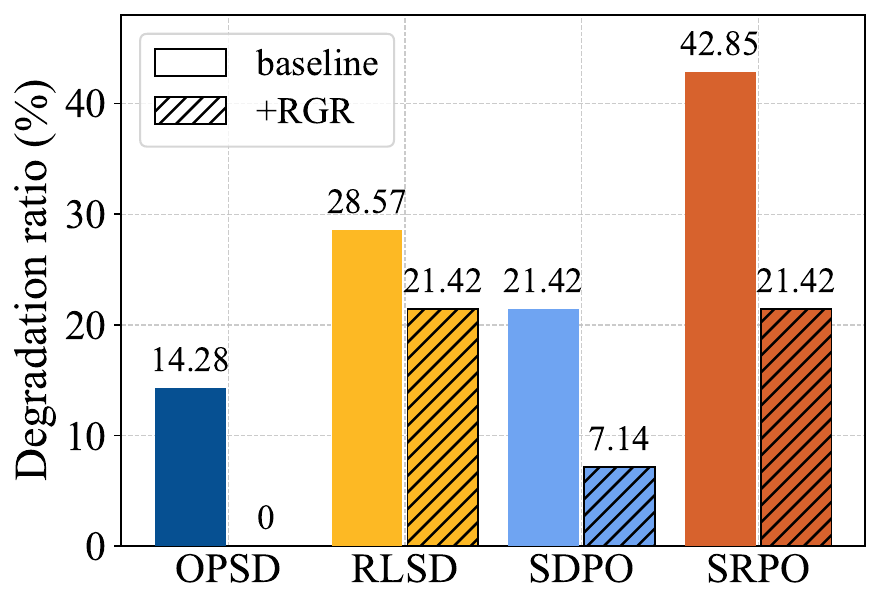}
    \caption{Comparison of the capability degradation ratio between the baseline and the baseline+RGR model on AIME24 (\%).
    }
    \label{fig:comparison}
    \vspace{-10pt}
\end{wrapfigure}
\begin{figure}[t!]
  \centering
  
  \includegraphics[width=\linewidth]{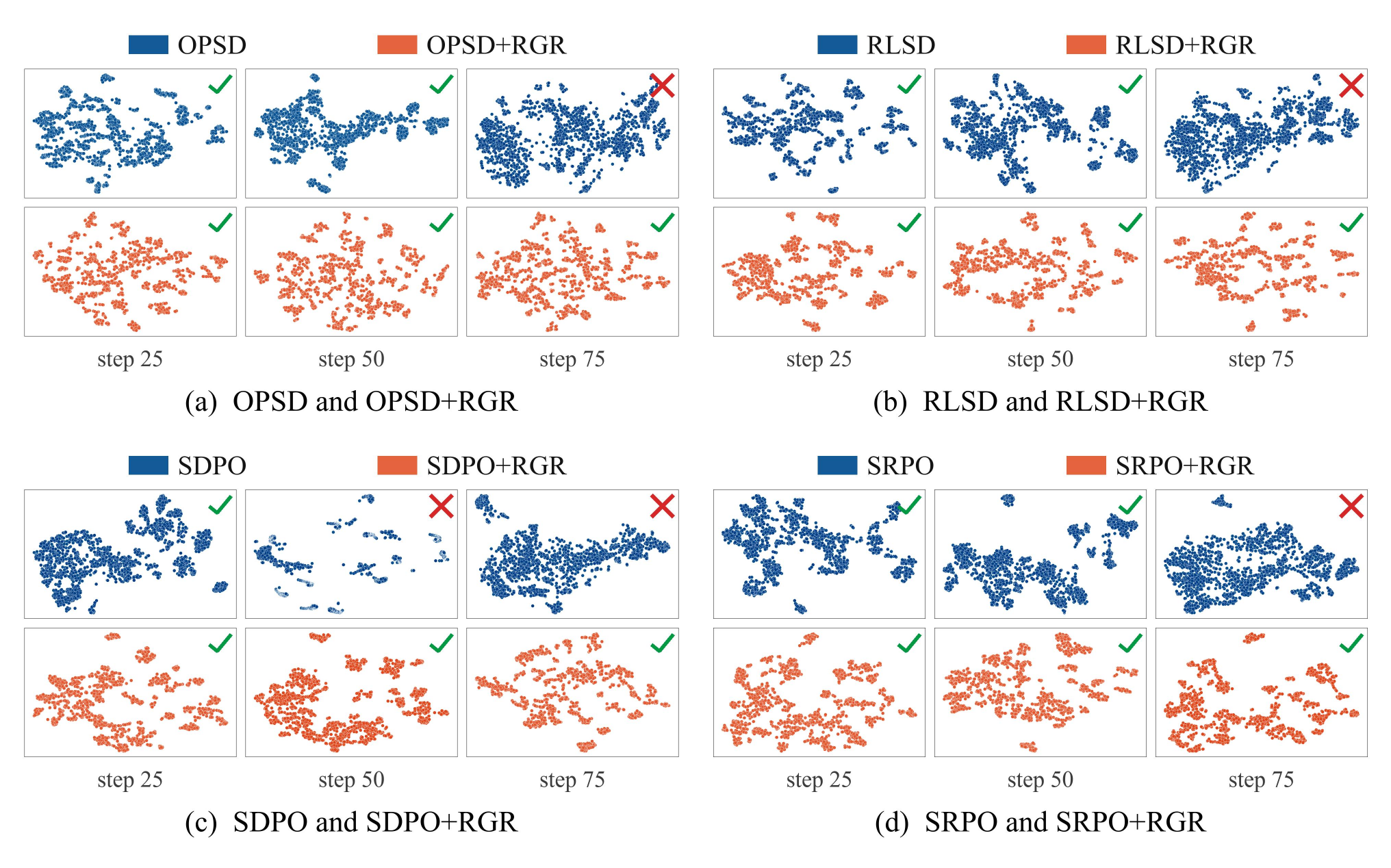}
   \vspace{-10pt}
  \caption{ Visualization of hidden state representations across different models and training steps, with response correctness indicated. }
  \vspace{-15pt}

  \label{fig:demo}
\end{figure}

(2) Removing the entropy constraint (\textbf{w/o Ent}) also leads to performance drops on one benchmarks. This suggests that entropy regularization provides complementary optimization benefits by further stabilizing the output distribution during self-distillation. However, the performance gap between RGR and \textbf{w/o Geo} is larger than that between RGR and \textbf{w/o Ent}, indicating that representation geometry preservation serves as the primary contributor to the improvements.

\vspace{-0.2cm}
\subsection{Training Dynamics Visualization (RQ3)}
\vspace{-0.2cm}

\textbf{Effect of RGR on mitigating progressive capability degradation.} To evaluate whether RGR alleviates progressive capability degradation, we
compare the ratio of queries that transition from correct to incorrect responses before and after incorporating RGR into different OPSD variants. As shown in Figure~\ref{fig:comparison}, adding RGR consistently reduces the degradation ratio across all evaluated methods. Specifically, RGR eliminates the degradation in OPSD and reduces the degradation ratios of SDPO, RLSD, and SRPO by 14.28\%, 7.15\%, and 21.43\%, respectively. These results demonstrate that RGR not only improves self-improvement performance but also helps preserve previously acquired capabilities by
maintaining a more stable hidden representation geometry during optimization.

\textbf{Representation geometry analysis.}
To further understand how RGR alleviates progressive capability degradation, we
visualize the hidden representations of responses corresponding to the
degraded queries. Specifically, we select queries that are initially responded
correctly but become incorrectly responded during optimization, and compare
the representation geometry before and after incorporating RGR.
As shown in Figure~\ref{fig:demo}, without RGR, the hidden
representations gradually lose their geometric structure, exhibiting reduced
inter-cluster center separation and less distinguishable cluster boundaries
when the response changes from correct to incorrect. In contrast, RGR
maintains more compact clusters with clearer boundaries  in most cases. These results are consistent with our theoretical analysis and
demonstrate that preserving representation geometry helps stabilize the
evolution of hidden representations during OPSD training.

\begin{wraptable}[16]{r}{0.42\columnwidth}
\centering
\small
\setlength{\tabcolsep}{2pt}
\renewcommand{\arraystretch}{1.0}
\vspace{ -10pt}
\caption{Forgetting evaluation on AIME24 (\%). \textbf{Before} denotes the performance before training on the tool-use dataset, while \textbf{After} denotes the performance after tool-use training.}
\label{tab:forgetting_evaluation}
\begin{tabular}{lccc}
\toprule
\textbf{Method}
& \textbf{Before}
& \textbf{After}
& \textbf{Forgetting} $\downarrow$ \\
\midrule
OPSD
& 56.67 & 54.72 & 1.95 \\
OPSD + RGR
& 59.89 & 58.06 & 1.83 \\ \hline
SDPO
& 38.61 & 33.33 & 5.28 \\
SDPO + RGR
& 41.94 & 40.00 & 1.94 \\ \hline
RLSD
& 50.27 & 47.22 & 3.05 \\
RLSD + RGR
& 51.39 & 48.00 & 3.39 \\ \hline
SRPO
& 37.78 & 34.72 & 3.06 \\
SRPO + RGR
& 39.44 & 38.33 & 1.11 \\
\bottomrule
\end{tabular}
\vspace{-30pt}
\end{wraptable}

Overall, RGR achieves consistent reductions in capability degradation across OPSD,
SDPO, RLSD, and SRPO, despite these methods adopting different
self-distillation objectives. This suggests that RGR
serves as a general geometry-preserving regularization mechanism that can be
integrated into different OPSD methods.

\subsection{Cross-Task Forgetting Study (RQ4)}

To further investigate whether RGR can mitigate capability degradation beyond the original task, we conduct a cross-task forgetting experiment. 
We first post-train a model on OpenThoughts to acquire mathematical problem-solving capabilities, and evaluate the model on AIME24 to measure performance. 
Then we continue post-training it on a tool-use dataset\citep{tang2023toolalpaca}, after convergence, we evaluate the model again on AIME24 to measure the performance after tool-use training. 
We apply this setting to OPSD, SDPO, RLSD, and SRPO, and compare their performance before and after training on the tool-use dataset, as shown in Table~\ref{tab:forgetting_evaluation}.
From Table~\ref{tab:forgetting_evaluation}, we can observe that: (1) Sequential training on a new task leads to forgetting of previously acquired capabilities. After tool-use training, all four methods exhibit performance degradation on AIME24, consistent with previous studies of catastrophic forgetting in continual fine-tuning of LLMs \citep{luo2025empirical,li2024revisiting}. 
(2) RGR generally alleviates cross-task forgetting. RGR reduces the forgetting gap for OPSD, SDPO, and SRPO, with particularly notable improvements for SDPO (5.28 $\rightarrow$ 1.94) and SRPO (3.06 $\rightarrow$ 1.11). This suggests that preserving representation geometry during subsequent optimization can help retain previously acquired capabilities. 
(3) The effectiveness of RGR depends on the underlying post-training method. For RLSD, the forgetting gap slightly increases from 3.05 to 3.39 after introducing RGR, suggesting that the effect of representation geometry preservation may vary across different optimization dynamics. Nevertheless, RGR reduces forgetting for three of the four methods, supporting its broader applicability to sequential post-training.

\section{Conclusion}

In this work, we identify and investigate the progressive capability degradation phenomenon in on-policy self-distillation, where LLM may lose previously acquired capabilities during self-improvement despite overall performance improvements. Through empirical analysis and theoretical investigation, we reveal that this degradation is associated with the deterioration of hidden representation geometry, characterized by reduced intra-cluster compactness and inter-cluster center separation. Based on these findings, we propose Representation Geometry Regularization (RGR), a general regularization framework that preserves intra-cluster compactness and inter-cluster center separation during OPSD optimization. Extensive experiments across multiple OPSD variants and mathematical reasoning benchmarks demonstrate that RGR effectively reduces correct-to-incorrect transitions while maintaining the self-improvement capability of OPSD. Future work will explore more adaptive strategies for preserving representation geometry and extending the proposed framework to broader model optimization paradigms beyond self-distillation.


\clearpage

\appendix
\section{Sensitivity Analysis}

\paragraph{Hyperparameter sensitivity.}
We investigate the sensitivity of our method to the
hyperparameters $\alpha$ and $\beta$ in
Eq.~(\ref{equ_ab}). We vary one hyperparameter at a time while
fixing the other at its selected value. Specifically, we vary
$\beta$ over
$\{2.5, 4.0, 5.0, 6.0\}\times10^{-2}$ while fixing
$\alpha=2.5\times10^{-2}$. Similarly, we vary $\alpha$ over
$\{0.5, 2.5, 3.5, 4.0, 5.0\}\times10^{-2}$ while fixing
$\beta=5\times10^{-2}$. As shown in
Figure~\ref{fig:hyperparameter}, 
As shown in the figure, the performance of our model follows a smooth increasing-then-decreasing trend with respect to the hyperparameter values. This observation suggests that the proposed method does not require highly precise hyperparameter tuning and remains stable change across a relatively broad range of parameter settings.

\begin{figure*}[ht]
    \centering
    \includegraphics[width=\textwidth]
    {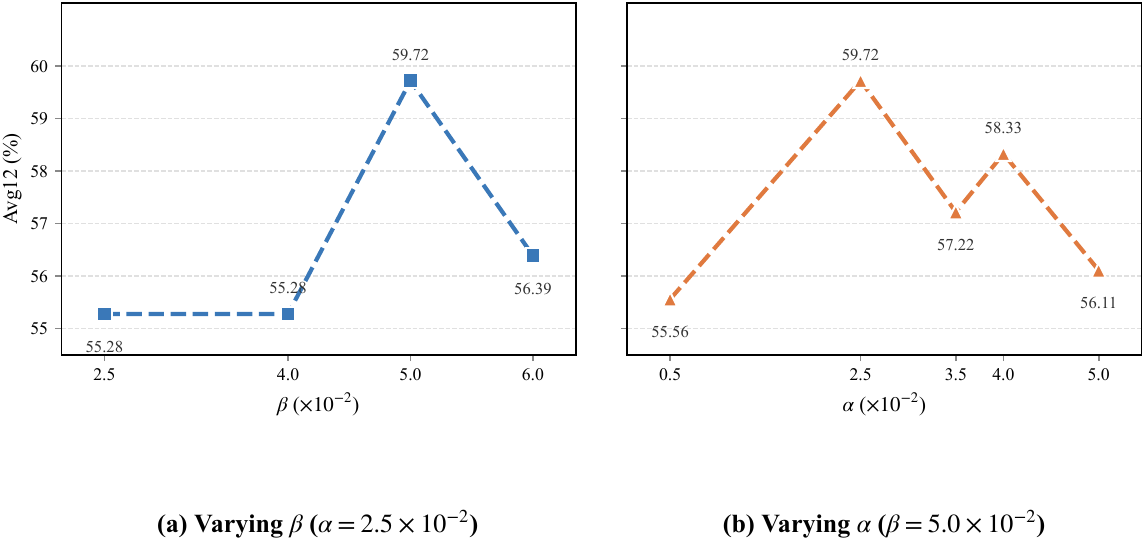}
    \caption{
    Sensitivity analysis of the hyperparameters $\alpha$ and
    $\beta$ on AIME24.
    Left: performance under different values of $\beta$, with
    $\alpha$ fixed at $2.5\times10^{-2}$.
    Right: performance under different values of $\alpha$, with
    $\beta$ fixed at $5\times10^{-2}$.
    The stars indicate the selected setting:
    $\alpha=2.5\times10^{-2}$ and
    $\beta=5\times10^{-2}$.
    }
    \label{fig:hyperparameter}
\end{figure*}

\section{Implementation Details}
\label{app:implementation}

Table~\ref{tab:training_config} lists the training hyperparameters that are
shared across all experiments. All runs use LoRA for parameter-efficient
fine-tuning with rank $r{=}64$ and $\alpha{=}128$, and are optimized with a
learning rate of $5\times10^{-6}$ under a maximum gradient norm of $0.1$.
The maximum sequence length is $20{,}000$ tokens, while completions are
capped at $1{,}024$ tokens. Rollout sampling uses a temperature of $1.1$,
top-$p$ of $0.95$, and top-$k$ of $20$. Training is performed in BF16
precision.

\begin{table}[h]
\centering
\caption{Hyperparameters used across all experiments.}
\label{tab:training_config}
\small
\begin{tabular}{ll}
\toprule
\textbf{Hyperparameter} & \textbf{Value} \\
\midrule
Learning rate & $5\times10^{-6}$ \\
Max gradient norm & 0.1 \\
LoRA rank $r$ & 64 \\
LoRA $\alpha$ & 128 \\
Max sequence length & 20{,}000 \\
Max completion length & 1{,}024 \\
Temperature & 1.1 \\
Top-$p$ & 0.95 \\
Top-$k$ & 20 \\
BF16 & Enabled \\
\bottomrule
\end{tabular}
\end{table}

\section{Notation and Additional Preliminaries}
\label{app:notation}

\begin{table}[t]
    \centering
    \caption{Summary of the main notation used in the paper.}
    \label{tab:notation}
    \small
    \renewcommand{\arraystretch}{1.15}
    \setlength{\tabcolsep}{6pt}
    \begin{tabularx}{\linewidth}{@{}lX@{}}
        \toprule
        Symbol & Description \\
\midrule

        $q$
        & a query \\

        $\pi_\theta$
        & LLM policy parameterized by $\theta$ and trained within the OPSD framework. \\

       $\theta$
        & Parameters of the LLM policy. \\
        
        $y_t$
        & Token generated at decoding step $t$ by $\pi_\theta$. \\
        
        $\mathbf{y}=(y_1,\ldots,y_T)$
        & Response consisting of $T$ tokens generated by $\pi_\theta$. \\

    $\mathbf{y}_{<t}=(y_1,\ldots,y_{t-1})$
        & Sequence of tokens generated before step $t$. \\

        $\pi_\theta(\cdot\mid q,\mathbf{y}_{<t})$
        & Conditional distribution over the next token given the query
          and previously generated tokens. \\

        $T$
        & Length of the generated response. \\

        $\mathcal{H}=\{\mathbf{h}_1,\ldots,\mathbf{h}_T\}$
        & the hidden representations corresponding to the generated response. \\

        $\hat{\mathcal{H}}=\{\mathbf{\hat{h}}_1,\ldots,\mathbf{\hat{h}}_T\}$
        & the hidden representations generated by the teacher model. \\

        $\mathcal{H}^*=\{\mathbf{h}_1^*,\ldots,\mathbf{h}_T^*\}$
        & the hidden representations generated by the optimal teacher model. \\

        ${\mathcal{H}}^{\dagger}=\{\frac{\mathbf{h}_1^*+\mathbf{\hat{h}}_1}{2}+\ldots,\frac{\mathbf{h}_T^*+\mathbf{\hat{h}}_T}{2}\}$
        & the mixed hidden representations of $\mathcal{H}$ and $\hat{\mathcal{H}}$. \\

        $\mathcal{H}^{ref}=\{\mathbf{h}_1^{ref},\ldots,\mathbf{h}_T^{ref}\}$
        & the hidden representations of the reference model. \\

        $\mathcal{H}=\{\mathbf{h}_1,\ldots,\mathbf{h}_T\}$
        & the hidden representations corresponding to the generated response. \\

        $\mathcal{H}=\{\mathbf{h}_1,\ldots,\mathbf{h}_T\}$
        & the hidden representations corresponding to the generated response. \\

        $\mathcal{C}=\{C_1,\ldots,C_K\}$
        & $\mathcal{H}$ into $K$ representation clusters. \\

        $\mathcal{C^\dagger}=\{C_1^\dagger,\ldots,C_K^\dagger\}$,
        & $\mathcal{H^{\dagger}}$ into $K$ representation clusters. \\

        $\mathcal{C}^{ref}=\{C_1^{ref},\ldots,C_K^{ref}\}$,
        & $\mathcal{H}^{ref}$ into $K$ representation clusters. \\

        $\mathcal{X}_k$ := $\{\mathbf{h}_i \mid i \in \mathcal{I}_k^{\dagger}\}$
        & the associated index set of  for each cluster $C_k^\dagger$ \\
        \\

        $|\mathcal{C}|$
        & the number of clusters of $\mathcal{C}$. \\

        $|\mathcal{C}^{\dagger}|$
        & the number of clusters of $\mathcal{C}^{\dagger}$. \\
        
        $|\mathcal{C}_\mathrm{ref}|$
        & the number of clusters of $\mathcal{C}_\mathrm{ref}$. \\

        $\mathbf{H}=
[\mathbf{h}_1,\ldots,\mathbf{h}_T] \in\mathbb{R}^{T\times d}$
        & the hidden representation matrix of the student model. \\

        $\mathbf{H}^{ref}=
[\mathbf{h}_1^{ref},\ldots,\mathbf{h}_T^{ref}] \in\mathbb{R}^{T\times d}$
        & the hidden representation matrix of the reference model. \\

        $\mu_k$
        & $\frac{1}{|C_k|} \sum_{\mathbf{h}_i\in C_k} \mathbf{h}_i$. \\

        $\mu_k^\dagger$
        & $\frac{1}{|C_k^\dagger|} \sum_{\mathbf{h}_i^\dagger\in C_k^\dagger} \mathbf{h}_i^\dagger$. \\

        $\mu_k^{ref}$
        & $\frac{1}{|C_k^{ref}|} \sum_{\mathbf{h}_i^{ref}\in C_k^{ref}} \mathbf{h}_i^{ref}$. \\

$\mathbf{L}$
        & The Laplacian matrix computed from 
the hidden representation matrix of the student model with a Gaussian kernel. \\

$\mathbf{L}^{\mathrm{ref}}$
        & The Laplacian matrix computed from 
the hidden representation matrix of the reference model with a Gaussian kernel. \\

        $\boldsymbol{\lambda}
=
\left[
\lambda_1(\mathbf{L}),\ldots,
\lambda_n(\mathbf{L})
\right]^\top$
        & the eigenvalues of the Laplacian matrix $\mathbf{L}$. \\

$\boldsymbol{\lambda}^{\mathrm{ref}}
=
\left[
\lambda_1(\mathbf{L}^{\mathrm{ref}}),\ldots,
\lambda_n(\mathbf{L}^{\mathrm{ref}})
\right]^\top$ & the eigenvalues of the Laplacian matrix $\mathbf{L}^{\mathrm{ref}}$. \\

        $\mu_k^{ref}$
        & $\frac{1}{|C_k^{ref}|} \sum_{\mathbf{h}_i^{ref}\in C_k^{ref}} \mathbf{h}_i^{ref}$. \\

        $\mathcal{V}$
        & the vocabulary of the LLM. \\

        $|\mathcal{V}|$
        & the size of the vocabulary. \\

        $W\in \mathbb{R}^{|\mathcal{V}|\times d}$
        & the weight matrix of the final layer for the trained model. \\

        $I_{|\mathcal{V}|} \in \mathbb{R}^{|\mathcal{V}|\times |\mathcal{V}|}$
        & the identity matrix . \\
        
        $\mathbf{1}_{|\mathcal{V}|} \in \mathbb{R}^{|\mathcal{V}|}$
        &  the all-ones column vector. \\
        
        $|\Delta|$
        &  $\sqrt{\sum_i||\hat{\mathbf{h}_i}-{\mathbf{h}_i}^*||_2}$. \\     
        
        $D(\cdot|\cdot)$
         &  a divergence measure between the teacher and student token distributions. \\    

         $\mathcal{E}_{\text{KD}}(\pi_{\theta})$
          &  a divergence measure between the teacher and student token distributions. \\
\bottomrule
\end{tabularx}
\end{table}

\section{Proof of Theorem~\ref{thm:excess_risk}}
\label{proof:excess_risk}
By definition~\ref{def:excess_risk}, we obtain
\begin{equation}
\mathcal{E}_{\text{KD}}(\pi_\theta)=\mathbb{E}_{q \sim \mathcal{D}, y \sim \pi_{\theta}(\cdot \mid q)}
\left[
\sum_{t=1}^{T}
D\left(
\pi_{\theta^*}(\cdot \mid q, z, y_{<t})
\,\middle\|\,
\pi_{\theta}(\cdot \mid q, y_{<t})
\right)\\
-D\left(
\pi_{\theta^*}(\cdot \mid q, z, y_{<t})
\,\middle\|\,
\pi_{\theta^*}(\cdot \mid q, y_{<t})
\right)
\right]
\end{equation}

To simplify the notation, we define
\begin{equation*}
q^*_t:=\pi_{\theta^*}(\cdot \mid q, z, y_{<t})
\end{equation*}
\begin{equation*}
p_t:=\pi_{\theta}(\cdot \mid q, y_{<t})
\end{equation*}
\begin{equation*}
q_t:=\pi_{\theta}(\cdot \mid q, z, y_{<t})
\end{equation*}
\begin{equation*}
p^*_t:=\pi_{\theta^*}(\cdot \mid q, y_{<t})
\end{equation*}

Let $z_t$ denote the corresponding logits of $p_t$, $z_t^*$ denote the corresponding logits of $q_t^*$ and $\hat{z_t}$ denotes the corresponding logit of $q_t$. For each $f\in\mathcal{F}_{OPSD}$,
let $p_f$ denote the policy induced by $f$.

For one response and $q \sim \mathcal{D}, y \sim \pi_{\theta}$, let $\mathcal{E}_{q,\theta}=\sum_t (KL(q^*_t||p_t))-E(KL(q^*_t||p^*_t)$, we have 
\begin{equation*}
\mathcal{E}_{\text{KD}}(\pi_{\theta})=\mathbb{E}_{q \sim \mathcal{D}, y \sim \pi_{\theta}}\mathcal{E}_{q,\theta}
\end{equation*}

\begin{equation*}
\mathcal{E}_{q,\theta}=\sum_t (KL(q^*_t||p_t))-KL(q^*_t||p^*_t))
\end{equation*}
\begin{equation*}
\mathcal{E}_{q,\theta}=\sum_t [KL(q^*_t||p_t)-KL(q_t||p_t)+KL(q_t||p_t)-KL(q_t^*||q_t)+KL(q_t^*||q_t)-KL(q^*_t||p^*_t)]
\end{equation*}

\begin{equation*}
\sum_t [KL(q^*_t||p_t)-KL(q_t||p_t)]\leq sup_{\pi_{\theta}\in {\Pi_{\text{OPSD}}}}|\sum_t[KL(q^*_t||p_f)-KL(q_t||p_f)]|
\end{equation*}
\begin{equation*}
\sum_t(KL(q_t||p_t)-KL(q_t^*||q_t))\leq0
\end{equation*}
\begin{equation*}
\sum_t(KL(q_t^*||q_t)-KL(q^*_t||p^*_t))\leq \sup_{\pi_{\theta}\in{\Pi_{\text{OPSD}}}}|\sum_t[KL(q^*_t||p_f)-KL(q_t||p_f)]|
\end{equation*}
so we have 

\begin{equation*}
\mathcal{E}_{q,\theta}\leq 2*\sup_{\pi_{\theta}\in{\Pi_{\text{OPSD}}}}|\sum_t[KL(q^*_t||p_f)-KL(q_t||p_f)]|
\end{equation*}

\begin{equation}
\label{equ_1}
\mathcal{E}_{q,\theta}\leq 2*\sup_{\pi_{\theta}\in{\Pi_{\text{OPSD}}}}|\sum_t[\sum_{i} q^*_{t,i}logq^*_{t,i}-q^*_{t,i} logp_{t,i}-\hat{q}_{t,i}logq^*_{t,i}+\hat{q}_{t,i} logp_{t,i}]|
\end{equation}

Consider an arbitrary LLM and fix an input context. Let
\(\widetilde{\mathbf z}
=(\widetilde z_1,\ldots,\widetilde z_V)^\top\in\mathbb{R}^V\)
denote the logit vector produced by the model, where \(V\) is the
vocabulary size. The corresponding next-token probability vector
\(\mathbf p=(p_1,\ldots,p_V)^\top\) is given by
\[
    p_i
    :=
    \frac{\exp(\widetilde z_i)}
         {\sum_{j=1}^{V}\exp(\widetilde z_j)},
    \qquad i\in[V].
\]

Let \(\mathbf 1\in\mathbb{R}^V\) denote the all-ones vector. Since
softmax is invariant to a constant shift, for any \(c\in\mathbb{R}\),
\[
    \operatorname{softmax}
    (\widetilde{\mathbf z}+c\mathbf 1)
    =
    \operatorname{softmax}(\widetilde{\mathbf z}).
\]
Define the centering matrix
\[
    A
    :=
    I_V-\frac{1}{V}\mathbf 1\mathbf 1^\top
\]
and the centered logit vector
\[
    \mathbf z
    :=
    A\widetilde{\mathbf z}
    =
    \widetilde{\mathbf z}
    -
    \left(
        \frac{1}{V}\mathbf 1^\top\widetilde{\mathbf z}
    \right)\mathbf 1.
\]
Then
\[
    \mathbf 1^\top\mathbf z=0,
    \qquad
    A\mathbf z=\mathbf z,
\]
and
\[
    \operatorname{softmax}(\mathbf z)
    =
    \operatorname{softmax}(\widetilde{\mathbf z}).
\]
Therefore, without loss of generality, we may work with centered
logits and assume throughout that
\[
    \mathbf z=A\mathbf z,
    \qquad\text{or equivalently}\qquad
    \sum_{j=1}^{V}z_j=0.
\]
For centered logits satisfying
\(\lVert\mathbf z\rVert_2\to 0\), Taylor expansion gives
\[
    e^{z_i}
    =
    1+z_i+\frac{1}{2}z_i^2
    +O\!\left(\lVert\mathbf z\rVert_2^3\right).
\]
Consequently,
\begin{align*}
    \sum_{j=1}^{V}e^{z_j}
    &=
    V+\sum_{j=1}^{V}z_j
    +\frac{1}{2}\sum_{j=1}^{V}z_j^2
    +O\!\left(\lVert\mathbf z\rVert_2^3\right) \\
    &=
    V+\frac{1}{2}\lVert\mathbf z\rVert_2^2
    +O\!\left(\lVert\mathbf z\rVert_2^3\right),
\end{align*}
where the second equality follows from
\(\sum_{j=1}^{V}z_j=0\). Therefore,
\begin{align*}
    \log\left(\sum_{j=1}^{V}e^{z_j}\right)
    &=
    \log V
    +
    \log\left(
        1+\frac{1}{2V}\lVert\mathbf z\rVert_2^2
        +O\!\left(\lVert\mathbf z\rVert_2^3\right)
    \right) \\
    &=
    \log V
    +\frac{1}{2V}\lVert\mathbf z\rVert_2^2
    +O\!\left(\lVert\mathbf z\rVert_2^3\right).
\end{align*}
In particular,
\[
    p_i
    =
    \frac{1}{V}
    \left(
        1+z_i+O(\lVert\mathbf z\rVert_2^2)
    \right).
\]

For Qwen3, let \(\mathbf h\in\mathbb{R}^d\) denote the final hidden
representation at the current token position and let
\(W\in\mathbb{R}^{V\times d}\) denote the weight matrix of its
bias-free language-modeling head. The original logits are
\(\widetilde{\mathbf z}=W\mathbf h\). Defining
\[
    A:=I_V-\frac{1}{V}\mathbf 1\mathbf 1^\top,
\]
the centered logits can be written as
\[
    \mathbf z:=A\widetilde{\mathbf z}=AW\mathbf h.
\]
Since \(\mathbf 1^\top\mathbf z=0\) and softmax is invariant to
constant shifts, \(\mathbf z\) induces the same next-token
distribution as \(\widetilde{\mathbf z}\).

Since OPSD employs LoRA without applying it to the output layer, the output-layer weights remain unchanged. Therefore, we obtain:

So we can simlify~\ref{equ_1}:

\begin{equation}
\mathcal{E}_{q,\theta}\leq \frac{2}{|\mathcal{V}|}*\sup_{\pi_{\theta}\in{\Pi_{\text{OPSD}}}}|\sum_t[(h_t-\frac{h_t^*+\hat{h_t}}{2})^{T}W^T(I_{|\mathcal{V}|}-\frac{1}{|\mathcal{V}|}1_{\mathcal{|V|}}1_{\mathcal{|V|}}^T)W(h^*_t-\hat{h}_t)]|
\end{equation}

Throughout our analysis, we assume that the hidden representations are normalized using RMSNorm without learnable scaling parameters.

By the Cauchy–Schwarz inequality, we have

\begin{equation}
\begin{aligned}
\mathcal{E}_{\text{KD}}(\pi_{\theta}) \leq \frac{2}{N|\mathcal{V}|}\sum_N\sup_{\pi_{\theta}\in{\Pi_{\text{OPSD}}}}|\Delta|||M||_2[\sqrt{\frac{Td}{2}+\frac{1}{2}\sum_{|\mathcal{C}^\dagger|}\sum_{{\mathbf{h}_i}\in \mathcal{X}_k}||\mathbf{h}_i-\mu_k^{\mathcal{X}}||_2^2-\frac{1}{2}\sum_{|\mathcal{C}^\dagger|}|\mathcal{C}^\dagger_k|||\mu_k^{\mathcal{X}}-\mu||^2_2}+\\
\sqrt{\sum_{|\mathcal{C}^\dagger|}|\mathcal{C}_k^\dagger|||\mu_k^{\mathcal{X}}-\mu_k^{\dagger}||^2_2+\sum_{|\mathcal{C}^\dagger|}\sum_{{\mathbf{h}_i^{\dagger}}\in \mathcal{X}_k}||\mathbf{h}_i^{\dagger}-\mu_k^{\dagger}||_2^2}]
\end{aligned}
\end{equation}

\section{Proof of Theorem~\ref{thm:cluster}}
\label{proof:cluster}
\begin{proof}

For each cluster, we stack all 
$h_{i,ref} \in \mathcal{C}_k^{ref}$
to form
$H_k^{ref}=[(h_1^{ref})^\top,\ldots,(h_{|C_k^{ref}|}^{ref})^\top]^\top$,
then we define
$G_k^{ref}=H_k^{ref}H_k^{ref\top}$.
Accordingly, we stack all 
$h_{i,align} \in \mathcal{C}_k^{align}$
to form
$H_k^{align}=[(h_1^{align})^\top,\ldots,(h_{|C_k^{align}|}^{align})^\top]^\top$,
then we define
$G_k^{align}=H_k^{align}H_k^{align\top}$.

and $\mathbf{1}_{|C_k^{ref}|} \in \mathbb{R}^{|C_k^{ref}|}$ denotes the all-ones column vector.$|\mathcal{C}_{ref}|$ denotes the number of clusters in the reference model.

First, we can simplify the optimization objective of intra-cluster compactness for each cluster:

\begin{equation}
\label{eq:intra_ini}
\left|
\sum_{
\mathbf{h}_{i}\in C_{k}^{align},
}
\left\|
\mathbf{h}_{i}-\boldsymbol{\mu}_{k}^{align}
\right\|_2^2
-
\sum_{\mathbf{h}_i^\mathrm{ref}\in C_k^\mathrm{ref}}
\left\|
\mathbf{h}_i^\mathrm{ref}-\boldsymbol{\mu}_k^\mathrm{ref}
\right\|_2^2
\right|,
\leq||I_{|C_{ref}|}-\mathbf{1}_{|C^{ref}_k|}^\top\mathbf{1}_{|C^{ref}_k|}||_F||G^{ref}_k-G_{k}||_F
\end{equation}

Second, $G_k^{align}$ is a principal submatrix of $HH^T$ and $G_k^{ref}$ is a principal submatrix of $HH^T$. Because the squared Frobenius norm is additive over pairwise disjoint principal submatrices, 
Therefore, for all clusters, we can obtain this through the Cauchy-Schwarz inequality:
\[
\sum_{k} ||I_{|C_k^{ref}|}-\mathbf{1}_{|C_k^{ref}|}^\top\mathbf{1}_{|C_k^{ref}|}||_F||G_{k}^{ref}-G_k^{align}||_F\leq (T-|C_{ref}|)^{\frac{1}{2}}||\sum_k (G_k^{ref}-G_k^{align})||_F
\]

Third, through the inclusion of the pairwise disjoint principal submatrices, we can get the upper bound of $\mathcal{L}_{intra}$:

\begin{align}\leq(T-|\mathcal{C}_{ref}|)^{\frac{1}{2}}||HH^T-H_{ref}H_{ref}^\top||_F
\end{align}
\end{proof}

\section{Proof of Proposition~\ref{thm:inter}}
\label{proof:inter}
\begin{proof}
Through the definition of the Laplacian matrix and the Cauchy-Schwarz inequality, we can obtain:

Let $\mathbf{1}_{T}\in \mathbb{R^T}$ denote the all-ones column vector.

\begin{equation*}
\begin{aligned}
    ||L-L^{ref}||_F=||diag(A\mathbf{1}_{T})-diag(A^{ref}\mathbf{1}_{T})+A-A^{ref}||_F\\\leq||diag(A-A^{ref})\mathbf{1}_{T}||_F+||A-A^{ref}||_F,
\end{aligned}
\end{equation*}

\begin{equation*}
||diag(A-A^{ref})\mathbf{1}_{T}||_F=||A-A^{ref}||_2\leq\sqrt{T}||A-A^{ref}||_F
\end{equation*}
\begin{equation*}
||diag(A-A^{ref})\mathbf{1}_{T}||_F+||A-A^{ref}||_F\leq(\sqrt{T}+1)||A-A^{ref}||_F
\end{equation*}
\begin{equation*}
(\sqrt{T}+1)||A-A^{ref}||_F\leq\frac{1+\sqrt{T}}{2\sigma^2}||D-D^{ref}||_F
\end{equation*}

\begin{equation*}
||D-D^{ref}||_F=||diag\{HH^T\}\mathbf{1}_{T}^T+\mathbf{1}_{T}diag\{HH^T\}-2HH^T-diag\{H_{ref}H_{ref}^T\}1^T-1diag\{H_{ref}H_{ref}^T\}+2H_{ref}H_{ref}^T||_F
\end{equation*}
Through Cauchy-Schwarz inequality, we have

\begin{equation*}
\sqrt{\sum_i |\lambda(L)-\lambda(L^{ref})|^2}\leq(\frac{1+\sqrt{T}}{\sigma})^2||HH^T-H_{ref}H_{ref}^T||_F
\end{equation*}
\end{proof}

\end{document}